\newcommand{\fgwdistance}{{FGW}_{q,\alpha}(\mu,\nu)}
\newcommand{\couplingset}{\Pi}
\theoremstyle{definition}
\theoremstyle{theorem}
\newtheorem{theorem}{Theorem}[section]
\theoremstyle{remark}
\newtheorem{lemma}[theorem]{Lemma}
\begin{document}

\twocolumn[
\icmltitle{Optimal Transport for structured data with application on graphs}
\icmlsetsymbol{equal}{*}

\begin{icmlauthorlist}
\icmlauthor{Titouan Vayer}{to}
\icmlauthor{Laetitia Chapel}{to}
\icmlauthor{R\'emi Flamary}{goo}
\icmlauthor{Romain Tavenard}{ed}
\icmlauthor{Nicolas Courty}{to}
\end{icmlauthorlist}

\icmlaffiliation{to}{Univ. Bretagne-Sud, CNRS, IRISA, F-56000 Vannes}
\icmlaffiliation{goo}{Univ. C\^ote d'Azur, CNRS, OCA Lagrange, F-06000 Nice}
\icmlaffiliation{ed}{Univ. Rennes, CNRS, LETG,  F-35000 Rennes}

\icmlcorrespondingauthor{Titouan Vayer}{titouan.vayer@irisa.fr}

\vskip 0.3in

]

\printAffiliationsAndNotice{}

\begin{abstract}
This work considers the problem of computing distances between structured
objects such as undirected graphs, seen as probability distributions in a
specific metric space. We consider {\bf a new transportation distance} ({\em
i.e.} that minimizes a total cost of transporting probability masses) that unveils
the geometric nature of the structured objects space. Unlike Wasserstein or
Gromov-Wasserstein metrics that focus solely and respectively on features (by
considering a metric in the feature space) or structure (by seeing structure as
a metric space), our new distance exploits jointly both information, and is
consequently called Fused Gromov-Wasserstein  (FGW). After discussing its
properties and computational aspects, we show results on a graph classification
task, where our method outperforms both graph kernels and
deep graph convolutional networks.  Exploiting further on the metric properties
of FGW, interesting geometric objects such as Fr{\'e}chet means or barycenters
of graphs are illustrated and discussed in a clustering context.

\end{abstract}

\section{Introduction}

There is a longstanding line of research on learning from structured data, {\em i.e.} objects that are a combination of a feature and structural information (see for example \cite{Bakir:2007:PSD:1296180,relationnalreasoning}). As immediate instances, graph data are usually ensembles of nodes with attributes (typically  $\mathbb{R}^{d}$ vectors) linked by some specific relation. Notable examples are found in chemical compounds or molecules modeling~\cite{DBLP:journals/corr/KriegeGW16}, brain connectivity~\cite{ktena2017distance}, or social networks~\cite{Yanardag15}. This generic family of objects also encompasses time series \cite{pmlr-v70-cuturi17a}, trees~\cite{day1985optimal} or even images~\cite{bachgraphkernel}.

Being able to leverage on both feature and structural information in a learning task is a tedious task, that requires the association in some ways of those two pieces of information in order to capture the similarity between the 
structured data. Several kernels have been designed to perform this task~\cite{Shervashidze:2011:WGK:1953048.2078187,wlkernel}. As a good representative of those methods, 
the Weisfeiler-Lehman kernel~\cite{wlkernel} captures in each node a notion of vicinity by aggregating, in the sense of the topology of the graph, the surrounding features. Recent advances in graph convolutional networks~\cite{DBLP:journals/corr/BronsteinBLSV16, Kipf2016SemiSupervisedCW,NIPS2016_6081} allows learning end-to-end the best combination of features by relying on parametric convolutions on the graph, {\em i.e.} learnable linear combinations of features. In the end, and in order to compare two graphs that might have different number of nodes and connections, those two categories of methods build a new representation for every graph that shares the same {space}, and that is amenable to classification.

\paragraph{A transportation distance between structured data.}
Contrasting with those previous methods, we suggest in this paper to see graphs as probability distributions, embedded in a specific metric space. We propose to define a specific notion of distance between those probability distributions, that can be used in most of the classical machine learning approaches. Beyond its mathematical properties, disposing of a distance between structured data, provided it is meaningful, is desirable in many ways: {\em i)} it can then be plugged into distance-based machine learning algorithms such as $k$-nn or t-SNE {\em ii)} its quality is not dependent on the learning set size, and {\em iii)} it allows considering interesting quantities such as geodesic interpolation or barycenters. To the best of our knowledge, this is one of the first attempts to define such a distance on structured data.

Yet, defining this distance is not a trivial task. While features can always be compared using a standard metric, such as $\ell_2$, comparing structures requires a notion of similarity which can be found \textit{via} the notion of {\em isometry}, since the graph nodes are not ordered (we define later on which cases two graphs are considered identical). We use the notion of transportation distance to compare two graphs represented as probability distributions. Optimal transport (OT) have inspired a number of recent breakthroughs in machine learning (\emph{e.g.} \cite{huang2016,courty2017optimal,arjovsky17a}) because of its capacity to compare empirical distributions, and also the recent advances in solving the underlying problem~\cite{peyre2018computational}. Yet, the natural formulation of OT cannot leverage the structural information of objects since it only relies on a cost function that compares their feature representations.

However, some modifications over OT formulation have been proposed in order to compare structural information of objects. Following the pioneering work by M\'emoli \cite{springerlink:10.1007/s10208-011-9093-5}, Peyr\'e {\em et al.} \cite{peyre2016gromov} propose a way of comparing two distance matrices that can be seen as representations of some objects' structures. They use an OT metric called Gromov-Wasserstein distance capable of comparing two distributions even if they do not lie in the same ground space and apply it to compute barycenter of molecular shapes. Even though this approach has wide applications, it only encodes  the intrinsic structural information in the transportation problem. To the best of our knowledge, the problem of including both structural and feature information in a unified OT formulation remains largely under-addressed.

\paragraph{OT distances that include both features and structures.}
Recent approaches tend to incorporate some structure information as a regularization of the OT problem.
For example in \cite{AlvarezMelis2018Structured} and \cite{courty2017optimal},  authors constrain transport maps to favor some assignments in certain groups. These approaches require a known and simple structure such as class clusters to work but do not generalize well to more general structural information.
In their work \cite{Thorpe2017}, propose an OT distance that combines both a Lagrangian formulation of a signal and its temporal structural information. They define a metric, called Transportation $L^{p}$ distance, that can be seen as a distance over the coupled space of time and feature. They apply it for signal analysis and show that combining both structure and feature tends to better capture the signal information. Yet, for their approach to work, the structure and feature information should lie in the same ambiant space, which is not a valid assumption for more general problems such as similarity between graphs. In \cite{DBLP:conf/aaai/NikolentzosMV17}, authors propose a graph similarity measure for discrete labeled graph with OT. Using the eigenvector decomposition of the adjency matrix, which captures graph connectivities, nodes of a graph are first embedded in a new space, then a ground metric based on the distance in both this embedding and the labels is used to compute a Wasserstein distance serving as a graph similarity measure.

\paragraph{Contributions.}

After defining structured data as probability measures (Section~\ref{sec:back}), we propose a new framework capable of taking into account both structure and feature information into the optimal transport problem. The framework can compare any usual structured machine learning data even if the feature and structure information dwell in spaces of different dimensions, allowing the comparison of undirected labeled graphs. The framework is based on a distance that embeds a  trade-off parameter which allows balancing the importance of the features and the structure. We propose numerical algorithms for computing this distance (Section~\ref{sec:fgw}), and we evaluate it (Section~\ref{sec:expe}) on both synthetic and real-world graph datasets. We also illustrate the notion of graph barycenters in a clustering problem.

\paragraph{Notations.}
The simplex histogram with $n$ bins will be denoted as  $\Sigma_{n}=\{h \in (\mathbb{R}^{*}_{+})^{n},\sum_{i=1}^{n} h_{i}=1,\}$. We note $\otimes$ the tensor-matrix multiplication, \emph{i.e.} for a tensor $L=(L_{i,j,k,l})$, $L\otimes B$ is the matrix $\left(\sum_{k,l} L_{i,j,k,l}B_{k,l}\right)_{i,j}$. $\big\langle . \big\rangle$ is the matrix scalar product associated with the Frobenius norm. For $x \in \Omega$, $\delta_{x}$ denotes the Dirac measure in $x$.

\section{Structured data as probability measures}
\label{sec:back}

In this paper, we focus on comparing structured data which combine a feature \textbf{and} a structure information.
More formally, we consider undirected labeled graphs as tuples of the form $\mathcal{G}=(\mathcal{V},\mathcal{E},\ell_f,\ell_s)$ where $(\mathcal{V},\mathcal{E})$ are the set of vertices and edges of the graph.
$\ell_f: \mathcal{V} \rightarrow \Omega_f$ is a labelling function which associates each vertex $v_{i} \in \mathcal{V}$ with a feature $a_{i}\stackrel{\text{def}}{=}\ell_f(v_{i})$ in some feature metric space $(\Omega_f,d)$. 
We will denote by \emph{feature information} the set of all the features $(a_{i})_i$ of the graph.
Similarly, $\ell_s: \mathcal{V} \rightarrow \Omega_s$ maps a vertex $v_i$ from the graph to its structure representation $x_{i}\stackrel{\text{def}}{=}\ell_s(v_{i})$ in some structure space $(\Omega_s,C)$ specific to each graph. $C : \Omega_s \times \Omega_s \rightarrow \mathbb{R_{+}}$ is a symmetric application which aims at measuring the similarity between the nodes in the graph. Unlike the feature space however, $\Omega_s$ is implicit and in practice, knowing the similarity measure $C$ will be sufficient. With a slight abuse of notation, $C$ will be used in the following to denote both the structure similarity measure and the matrix that encodes this similarity between pairs of nodes in the graph $(C(i,k) = C(x_i, x_k))_{i,k}$.
Depending on the context, $C$ can either encode the neighborhood information of the nodes, the edge information of the graph or more generally it can model a distance between the nodes such as the shortest path distance or the harmonic distance \cite{NIPS2017_6614}. When $C$ is a metric, such as the shortest-path distance, we naturally endow the structure with the metric space $(\Omega_s,C)$.
We will denote by \emph{structure information} the set of all the structure embeddings $(x_{i})_i$ of the graph.

\begin{figure}[t]
    \centering
        \includegraphics[width=1\columnwidth]{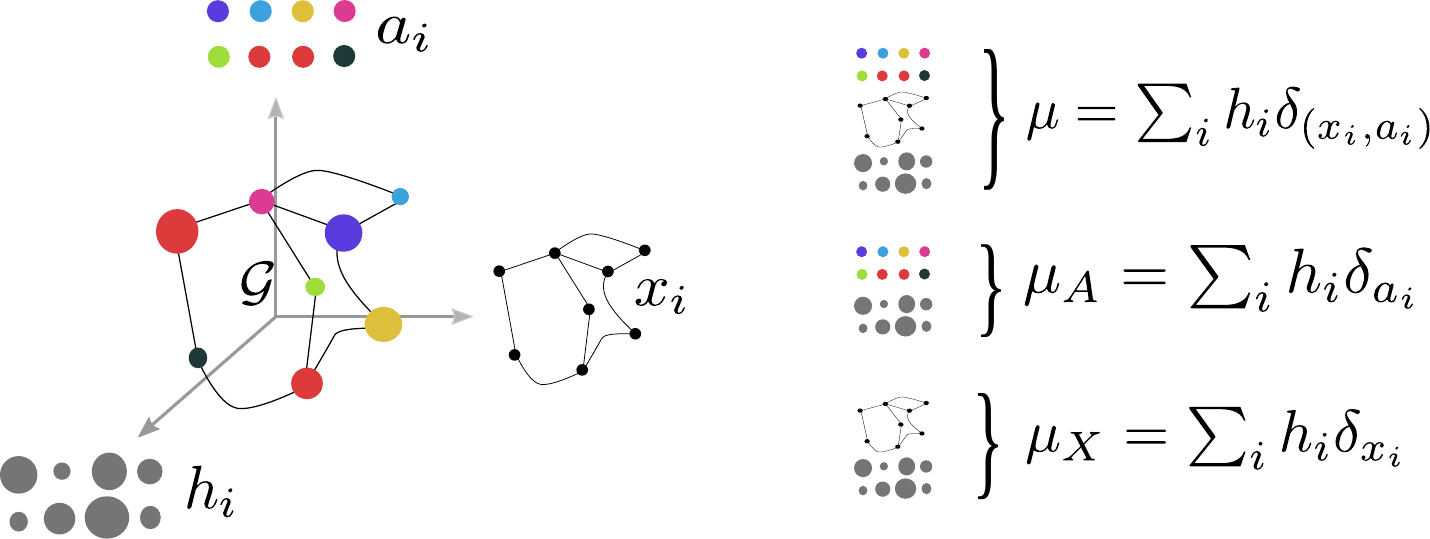}
    \caption{(Left) Labeled graph with $(a_{i})_{i}$ its feature information, $(x_{i})_{i}$ its structure information and histogram $(h_{i})_{i}$ that measures the relative importance of the vertices. (Right) Associated structured data which is entirely described by a fully supported probability measure $\mu$ over the product space of feature and structure, with marginals $\mu_{X}$ and $\mu_{A}$ on the structure and the features respectively.\label{graphex}}
\end{figure}

We propose to enrich the previously described graph with a histogram which serves the purpose of signaling the relative importance of the vertices in the graph. 
To do so, if we assume that the graph has $n$ vertices, we equip those vertices with weights $(h_{i})_{i} \in \Sigma_{n}$. 
Through this procedure, we derive the notion of \emph{structured data} as a tuple $\mathcal{S}=(\mathcal{G},h_{\mathcal{G}})$ where $\mathcal{G}$ is a graph as described previously and $h_{\mathcal{G}}$ is a function that associates a weight to each vertex. This definition allows the graph to be represented by a fully supported probability measure over the product space feature/structure $\mu= \sum_{i=1}^{n} h_{i} \delta_{(x_{i},a_{i})}$ which describes the entire structured data (see Fig. \ref{graphex}). When all the weights are  equal (\textit{i.e.} $h_{i}=\frac{1}{n}$), so all vertices have the same relative importance, the structured data holds the exact same information as its graph. However, weights can be used to encode some \textit{a priori} information. For instance on segmented images, one can construct a graph using the spatial neighborhood of the segmented zones, the features can be taken as the average color in the zone, and the weights as the ratio of image pixels in the zone.

\section{Fused Gromov-Wasserstein approach for structured data}
\label{sec:fgw}
We aim at defining a distance between two graphs $\mathcal{G}_1$ and  $\mathcal{G}_2$, described respectively by their probability measure $\mu= \sum_{i=1}^{n} h_{i} \delta_{(x_{i},a_{i})}$ and $\nu= \sum_{i=1}^{m} g_{j} \delta_{(y_{j},b_{j})}$, where  $h \in \Sigma_{n}$ and $g \in \Sigma_{m}$ are histograms. Without loss of generality we suppose $(x_{i},a_{i})\neq (x_{j},a_{j})$ for $i\neq j$ (same for $y_{j}$ and $b_{j}$).

We introduce $\couplingset(h,g)$ the set of all admissible couplings between $h$ and $g$, \emph{i.e.} the set : 
$$\couplingset(h,g)=\{ \pi \in \mathbb{R}_{+}^{n \times m} \ \textit{s.t.} \ \sum_{i=1}^n\pi_{i,j}=h_j \ , \sum_{j=1}^m\pi_{i,j}=g_i \},$$ 
where $\pi_{i,j}$ represents the amount of mass shifted from the bin $h_{i}$ to $g_{j}$ for a coupling $\pi$. To that extent, the matrix $\pi$ describes a probabilistic matching of the nodes of the two graphs.
$M_{AB}=(d(a_{i},b_{j}))_{i,j}$ is a $n \times m$ matrix standing for the distance between the features.
The structure matrices are denoted $C_{1}$ and $C_{2}$, and $\mu_{X}$ and $\mu_{A}$ (resp. $\nu_{Y}$ and $\nu_{B}$) are representative of the marginals of $\mu$ (resp. $\nu$) \textit{w.r.t.} the structure and feature respectively (see Fig. \ref{graphex}).
We also define the similarity between the structures by measuring the similarity between all pairwise distances within each graph thanks to the 4-dimensional tensor $L(C_{1},C_{2})$:
\begin{equation*}
    L_{i,j,k,l}(C_{1},C_{2})=|C_{1}(i,k)-C_{2}(j,l)| .
\end{equation*}

\subsection{$FGW$ distance}
\begin{figure}
    \centering
    \includegraphics[width=0.9\linewidth]{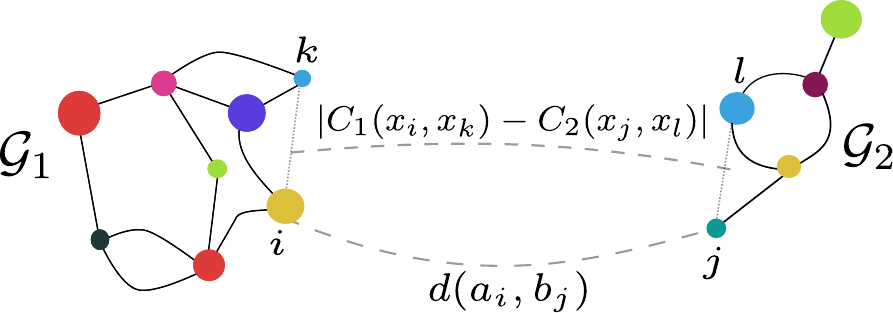}
    \caption{\label{def_fgw}    $FGW$ loss $E_{q}$ for a coupling $\pi$ depends on both a similarity between each feature of each node of each graph $(d(a_{i},b_{j}))_{i,j}$ and between all intra-graph structure similarities $(\left|C_{1}(x_i, x_k)-C_{2}(x_j, x_l)\right|)_{i,j,k,l}$.}
 \end{figure}

We define a novel Optimal Transport discrepancy called the Fused Gromov-Wasserstein distance. It is defined for a trade-off parameter  $\alpha \in [0,1]$ as
\begin{equation}
\label{discretefgw}
\fgwdistance= \underset{\pi \in \couplingset(h,g)}{\min}E_{q}(M_{AB},C_{1},C_{2},\pi)  \\
\end{equation}
where
\begin{equation*}
{\small
\begin{split}
E_{q}(M_{AB},C_{1},C_{2},\pi) = \big\langle (1-\alpha) M_{AB}^{q} + \alpha L(C_{1},C_{2})^{q} \otimes \pi, \pi \big\rangle \\
 =\sum_{i,j,k,l} (1-\alpha) d(a_{i},b_{j})^{q}+\alpha |C_{1}(i,k)-C_{2}(j,l)|^{q} \pi_{i,j}\pi_{k,l}
\end{split}
}
\end{equation*}
The $FGW$ distance looks for the coupling $\pi$ between the vertices of the graph
that minimizes the cost $E_{q}$ which is a linear combinaison of a cost
$d(a_{i},b_{j})$ of transporting one feature $a_{i}$ to a feature $b_{j}$ and a
cost $|C_{1}(i,k)-C_{2}(j,l)|$ of transporting pairs of nodes in each structure
(see Fig. \ref{def_fgw}). As such, the optimal coupling tends to associate pairs of feature and
structure points with similar distances within each structure pair and with
similar features. 
As an important feature of $FGW$, by relying on a sum of (inter- and intra-)vertex-to-vertex distances, it can handle structured data with continuous attributed or discrete labeled nodes (thanks to the definition of $d$) and can also be computed even if the graphs have different number of nodes.

This new distance is called the $FGW$ distance as it acts as a generalization of the Wasserstein \cite{Villani} and Gromov-Wasserstein  \cite{springerlink:10.1007/s10208-011-9093-5,solomon_entropic_gromov2016} distances as stated in the following theorem:
 
\begin{theorem}{Interpolation properties.}
\label{interpolationtheorem}

\noindent As $\alpha$ tends to zero, the $FGW$ distance recovers the Wasserstein distance between the features ${W}_{q}(\mu_{A},\nu_{B})^{q}$
\begin{equation*}
\begin{split}
\lim\limits_{\alpha \rightarrow 0} \fgwdistance&={W}_{q}(\mu_{A},\nu_{B})^{q}=\underset{\pi \in \couplingset(h,g)}{\min} \langle \pi,M_{AB}^{q} \rangle\\
 \end{split}
 \end{equation*}
and as $\alpha$ tends to one, we recover the Gromov-Wasserstein distance ${GW}_{q}(\mu_{X},\nu_{Y})^{q}$ between the structures:
 \begin{equation*}
 \begin{split}
\lim\limits_{\alpha \rightarrow 1}\fgwdistance&={GW}_{q}(\mu_{X},\nu_{Y})^{q} \\
&= \underset{\pi \in \couplingset(h,g)}{\min} \langle L(C_{1},C_{2})^{q} \otimes \pi, \pi \rangle
\end{split}
\end{equation*}

\end{theorem}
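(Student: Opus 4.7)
The plan is to exploit the fact that $E_q$ is affine in $\alpha$. Writing
$E_q(M_{AB},C_1,C_2,\pi) = (1-\alpha)\, F(\pi) + \alpha\, G(\pi)$,
where $F(\pi) = \langle M_{AB}^{q},\pi\rangle$ is the pure-feature Wasserstein cost and $G(\pi) = \langle L(C_1,C_2)^{q}\otimes \pi,\pi\rangle$ is the pure-structure Gromov--Wasserstein cost, I would first note two structural facts: both $F$ and $G$ are nonnegative on $\couplingset(h,g)$ (since $d\ge 0$ and the absolute value is nonnegative), and the feasible polytope $\couplingset(h,g)$ does not depend on $\alpha$ and is compact. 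Continuity of $F$ and $G$ on this compact set then gives uniform bounds $F(\pi), G(\pi) \le M$ for some finite $M$.

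For the limit $\alpha\to 0$, I would use a two-sided squeeze. Upper bound: pick any feature-optimal $\pi^{\star}\in\arg\min_{\pi} F(\pi)$; by the definition of $\fgwdistance$ as a minimum,
$\fgwdistance \le (1-\alpha)F(\pi^{\star}) + \alpha G(\pi^{\star})$,
whose right-hand side tends to $F(\pi^{\star}) = W_q(\mu_A,\nu_B)^q$ as $\alpha\to 0$ since $G(\pi^{\star})$ is a fixed constant. Lower bound: for an optimal $\pi_\alpha$ at level $\alpha$, discard the nonnegative $\alpha G(\pi_\alpha)$ term to get
$\fgwdistance \ge (1-\alpha)F(\pi_\alpha) \ge (1-\alpha)\min_{\pi} F(\pi) = (1-\alpha) W_q(\mu_A,\nu_B)^q$,
which also tends to $W_q(\mu_A,\nu_B)^q$. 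Squeeze gives the first limit. The case $\alpha\to 1$ is completely symmetric: drop the nonnegative $(1-\alpha)F(\pi_\alpha)$ term in the lower bound and plug a GW-optimal coupling into the upper bound, using the boundedness of $F$ on $\couplingset(h,g)$.

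I do not expect any real obstacle; the whole argument is a squeeze exploiting the $\alpha$-independent feasible set and the nonnegativity of the two partial costs. The only piece of bookkeeping worth verifying is the identifications $\min_{\pi} F(\pi) = W_q(\mu_A,\nu_B)^q$ and $\min_{\pi} G(\pi) = GW_q(\mu_X,\nu_Y)^q$. Both follow directly from the definitions of Wasserstein and Gromov--Wasserstein: a matrix $\pi\in\couplingset(h,g)$ is in bijection with a coupling of the marginal measures of $\mu$ and $\nu$ (which carry the same weights $h$ and $g$), and the cost $M_{AB}^{q}$ (resp. the tensor $L(C_1,C_2)^{q}$) depends only on the feature components (resp. only on the structure components), so the respective minima coincide with the Wasserstein (resp. Gromov--Wasserstein) distance between the corresponding marginals.
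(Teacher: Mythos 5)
Your proposal is correct and follows essentially the same route as the paper: the upper bound by evaluating the $FGW$ objective at the Wasserstein-optimal (resp.\ Gromov--Wasserstein-optimal) coupling is exactly the paper's suboptimality step, and your lower bound obtained by discarding the nonnegative $\alpha G(\pi_\alpha)$ (resp.\ $(1-\alpha)F(\pi_\alpha)$) term is the paper's inequality $\fgwdistance \geq (1-\alpha)W_q(\mu_A,\nu_B)^q$ (resp.\ $\geq \alpha\, GW_q(\mu_X,\nu_Y)^q$) from its preliminary Bounds lemma. The squeeze then concludes identically in both arguments.
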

Proof of this theorem can be found in the supplementary material.

Similarly to the Wasserstein and Gromov-Wasserstein distances, $FGW$ enjoys metric properties over the space of structured data as stated in the following theorem:
\begin{theorem} ${FGW}$ defines a metric for $q=1$ and a semi-metric for $q >1$.

If $q=1$, and if $C_{1}$, $C_{2}$ are distance matrices then ${FGW}$ defines a metric over the space of structured data quotiented by the measure preserving isometries that are also feature preserving. More precisely, ${FGW}$ satisfies the triangle inequality and is nul \textit{iff} $n=m$ and there exists a bijection $\sigma : \{1,..,n\} \rightarrow \{1,..,n\}$ such that : 
\begin{equation}
\label{weightpreserve}
\forall i \in \{1,..,n\}, \ h_{i}=g_{\sigma(i)}
\end{equation}
\begin{equation}
\label{featurepreserve}
\forall i \in \{1,..,n\}, \ a_{i}=b_{\sigma(i)} 
\end{equation}
\begin{equation}
\label{structurepreserve}
\forall i,k \in  \{1,..,n\}^2,  \ C_{1}(i,k)=C_{2}(\sigma(i),\sigma(k))
\end{equation}
If $q>1$,  the triangle inequality is relaxed by a factor $2^{q-1}$ such that ${FGW}$ defines a semi-metric.
\end{theorem}

All proofs can be found in the supplementary material. The resulting application $\sigma$ preserves the weight of each node (eq. \eqref{weightpreserve}),  the features (eq. \eqref{featurepreserve}) and the and the pairwise structure relation between the nodes (eq. \eqref{structurepreserve}).  For example, comparing two graphs with uniform weights for the vertices and with shortest path structure matrices, the ${FGW}$ distance vanishes \textit{iff} the graphs have the same number of vertices and \textit{iff} there exists a one-to-one mapping between the vertices of the graphs which respects both the shortest paths and the features. More informally, it means that graphs have vertices with the same labels connected by the same edges.

The metric ${FGW}$ is fully unsupervised and can be used in a wide set of applications such as
$k$-nearest-neighbors, distance-substitution kernels, pseudo-Euclidean
embeddings, or representative-set methods. Arguably, such a distance also allows
for a fine interpretation of the similarity (through the optimal mapping $\pi$), contrary
to end-to-end learning machines such as neural networks.

\subsection{Fused Gromov-Wasserstein barycenter}
\label{sec:bary}
OT barycenters have many desirable properties and applications \cite{agueh2011barycenters,peyre2016gromov}, yet no formulation can leverage both structural and feature information in the barycenter computation. 
In this section, we consider the ${FGW}$ distance to define a barycenter of a set of structured data as a Fr\'echet mean. 
We look for the structured data $\mu$ that minimizes the sum of (weighted) ${FGW}$ distances within a given set of structured data $(\mu_{k})_{k}$ associated with structure matrices $(C_{k})_{k}$, features $(B_{k})_{k}$ and base histograms $(h_{k})_{k}$. 
 For simplicity, we assume that the histogram $h$ associated to the barycenter is known and fixed; in other words, we set the number of vertices $N$ and the weight associated to each of them.

In this context, for a fixed $N \in \mathbb{N}$ and $(\lambda_{k})_{k}$ such that $\sum_{k} \lambda_{k}=1$ , we aim to find the set of features $A = (a_i)_i$ and the structure matrix $C$ of the barycenter that minimize the following equation:
\begin{align}
&\underset{\mu}{\text{min}} \sum_{k} \lambda_{k} {FGW}_{q,\alpha}(\mu,\mu_{k}) \label{fgwbarycenter} \\
&=\underset{C \in \mathbb{R}^{N \times N},\ A \in \mathbb{R}^{N \times n},(\pi_{k})_{k}}{\text{min}} \sum_{k} \lambda_{k} E_{q}(M_{AB_{k}},C,C_{k},\pi_{k})\nonumber
\end{align}
Note that this problem is jointly convex \textit{w.r.t.} $C$ and $A$ but not \textit{w.r.t.} $\pi_{k}$. We discuss the proposed algorithm to solve this problem in the next section.  Interestingly enough, one can derive several variants of this problem, where the features or the structure matrices of the barycenter can be fixed. Solving the related simpler optimization problem extends straightforwardly. 
 We give examples of such barycenters both in the experimental section where we solve a graph based $k$-means problem.

\subsection{Optimization and algorithmic solution}

In this section we discuss the numerical optimization problem for computing the ${FGW}$ distance between discrete distributions.

\textbf{Solving the Quadratic Optimization problem.}
Equation \ref{discretefgw} is clearly a quadratic {problem} \emph{w.r.t.} $\pi$.
Note that despite the apparent $\mathcal{O}(m^2n^2)$ complexity of computing the
tensor product, one can simplify the sum to complexity $\mathcal{O}(mn^2+m^2n)$
\cite{peyre2016gromov} when considering $q=2$. In this case,  the
${FGW}$ computation problem can be re-written as finding $\pi^{*}$ such
that:
\begin{equation}
 \pi^{*}=\underset{\pi \in \couplingset(h,g)}{\arg\min}\quad\text{vec}(\pi)^{T} Q(\alpha) \text{vec}(\pi)+ \text{vec}(D(\alpha))^{T}  \text{vec}(\pi)
 \label{l2eq}
 \end{equation}
where $Q=-2 \alpha  C_{2} \otimes_{K} C_{1}$ and $D(\alpha)=(1-\alpha)M_{AB}$.
$\otimes_{K}$  denotes the Kronecker product of two matrices, $\text{vec}$ the
column-stacking operator. With such form, the resulting optimal map can be seen
as a quadratic regularized map from initial Wasserstein \cite{ferradans2014regularized,flamary2014optlaplace}. However, unlike these approaches, we have a quadratic
but provably non convex term.
The gradient  $G$ that arises from Eq.~\eqref{discretefgw} can be expressed with the following partial derivative \emph{w.r.t.} $\pi$:
\begin{equation}
  G=(1-\alpha) M_{AB}^{q}+2\alpha L(C_{1},C_{2})^{q}\otimes \pi
    \label{eq:gradfgw}
\end{equation}
that can be computed with $\mathcal{O}(mn^2+m^2n)$ operations when $q=2$.

\begin{algorithm}[t]
    \caption{\label{alg:cg}
     Conditional Gradient (CG) for $FGW$}
            \begin{algorithmic}[1]
            \STATE $\pi^{(0)}\leftarrow \mu_X\mu_Y^\top$
            \FOR {$i=1,\dots,$}
            \STATE $G\leftarrow$ Gradient from Eq. \eqref{eq:gradfgw} \emph{w.r.t.} $\pi^{(i-1)}$
            \STATE $\tilde\pi^{(i)}\leftarrow $ Solve OT with ground loss $G$
            \STATE $\tau^{(i)}\leftarrow$ Line-search for loss \eqref{discretefgw} with $\tau\in(0,1)$ using Alg. \ref{alg:line}
            \STATE $\pi^{(i)}\leftarrow (1-\tau^{(i)})\pi^{(i-1)}+\tau^{(i)}\tilde\pi^{(i)} $
            \ENDFOR
        \end{algorithmic}
          \end{algorithm}
Solving a large scale QP with a classical solver can be computationally expensive. In \cite{ferradans2014regularized}, authors propose a solver for a graph regularized optimal transport problem whose resulting optimization problem is also a QP. We can then directly use their conditional gradient defined in Alg. \ref{alg:cg} to solve our optimization problem. It only needs at each iteration  to compute the gradient in Eq.~\eqref{eq:gradfgw} and to solve a classical OT problem for instance with a network flow algorithm. The line-search part is a constrained minimization of a second degree polynomial function which is adapted to the non convex loss in Alg. \ref{alg:line}.  While the problem is non convex, conditional gradient is known to converge to a local stationary point~\cite{lacoste2016convergence}.

        \begin{algorithm}[t]
        \caption{\label{alg:line}
         Line-search for CG ($q=2$)}
                    \begin{algorithmic}[1]

        		\STATE $c_{C_{1},C_{2}}$ from Eq. (6) in \cite{peyre2016gromov}
                \STATE $a=-2 \alpha \langle C_{1} \tilde\pi^{(i)} C_{2},\tilde\pi^{(i)} \rangle$ 
                \STATE $b{=}\langle(1{-}\alpha) M_{AB} {+}\alpha c_{C_{1},C_{2}}, \tilde\pi^{(i)} \rangle$
                $\, {-}2\alpha \big( \langle C_{1} \tilde\pi^{(i)} C_{2},\pi^{(i-1)} \rangle + \langle C_{1} \pi^{(i-1)} C_{2}, \tilde\pi^{(i)} \rangle \big)$
                \STATE $c=E_{2}(M_{AB},C_{1},C_{2},\pi^{(i-1)})$
                \IF {$a>0$}                 \STATE $\tau^{(i)} \leftarrow \text{min}(1,\text{max}(0,\frac{-b}{2a}))$
                \ELSE{\STATE $\tau^{(i)} \leftarrow 1$ if $a+b<0$ else $\tau^{(i)} \leftarrow 0$} 
                \ENDIF

            \end{algorithmic}
                  \end{algorithm}

\textbf{Solving the barycenter problem with Block Coordinate Descent (BCD).}
We propose to minimize eq. \eqref{fgwbarycenter} using a BCD algorithm, \textit{i.e.} iteratively minimizing with respect to the couplings $\pi_{k}$, to the metric $C$ and the feature vector $A$.
The minimization of this problem \emph{w.r.t.} $(\pi_{k})_{k}$ is equivalent to compute $S$ independent Fused Gromov-Wasserstein distances as discussed above. We suppose that the feature space is $\Omega_f=(\mathbb{R}^{d},\ell_{2}^2)$ and we consider $q=2$.
Minimization \emph{w.r.t.} $C$ in this case has a closed form (see Prop. 4 in
\cite{peyre2016gromov}) :
\begin{equation*}
    C \leftarrow \frac{1}{hh^{T}} \sum\nolimits_{k} \lambda_{k} \pi_{k}^{T} C_{k} \pi_{k}
\end{equation*}
where $h$ is the histogram of the barycenter as discussed in section \ref{sec:bary}.
Minimization \emph{w.r.t.} $A$ can be computed with (eq. (8) in \cite{pmlr-v32-cuturi14}):

$$A \leftarrow \sum_{k} \lambda_{k} B_{k} \pi_{k}^{T} \text{diag}(\frac{1}{h})$$

\section{Experimental results}
\label{sec:expe}

{We now illustrate the behaviour of ${FGW}$ on synthetic and real
datasets. The algorithms presented in the previous section have been implemented
using the Python Optimal Transport toolbox~\cite{flamary2017pot} and will be
released upon publication.

\subsection{Illustration of ${FGW}$ on trees}

  We construct two trees {as illustrated in Figure \ref{mapstoy}, where the 1D node features are shown with colors (in red, features belong to $[0,1]$ and in blue in $[9,10]$).}
 The structure similarity matrices $C_{1}$ and $C_{2}$ are the shortest path between nodes. Figure \ref{mapstoy} illustrates the behavior of the ${FGW}$ distance when the trade-off parameter $\alpha$ changes. The left part recovers the Wasserstein distance ($\alpha=0$):  red nodes are coupled to red ones and the blue nodes to the blue ones. For a alpha close to $1$ (right), we recover the Gromov-Wasserstein distance: all couples of points are coupled to another couple of points, without taking into account the features. Both approaches fail in discriminating the two trees. Finally, for an intermediate $\alpha$ in $FGW$ (center), the bottom and first level structure is preserved as well as the feature matching (red on red and blue on blue), resulting on a positive distance.

\tikzstyle{vertex}=[circle,fill=black,minimum size=4pt,inner sep=0pt]
\tikzstyle{edge} = [draw]
\tikzstyle{transp} = [draw, dashed]
\tikzstyle{feuille1}=[rectangle,draw,fill=blue,text=blue,minimum size=4pt,inner sep=0pt]
\tikzstyle{feuille2}=[circle,draw,fill=red,text=blue,minimum size=4pt,inner sep=0pt]
\tikzstyle{entour}=[ellipse,draw,text=blue]

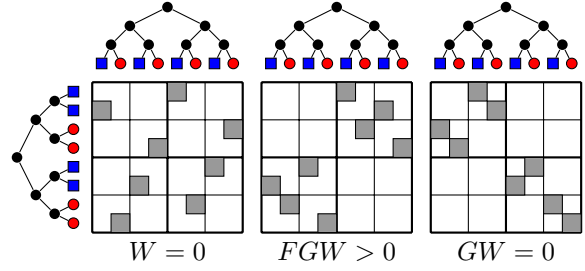
\begin{figure}[t]
\centering
\begin{tikzpicture}[scale=0.25, auto,swap]
    \foreach \pos/\name in {{(0,4)/a}, {(1,2)/c}, {(1,6)/b},
                             {(2,7)/d}, {(2,5)/e}, {(2,3)/f}, {(2,1)/g}}
        		\node[vertex] (\name) at \pos {};
	    \foreach \pos/\name in {
                             {(3,7.5)/h},  {(3,6.5)/i},  {(3,3.5)/l}, {(3,2.5)/m}}
        		\node[feuille1] (\name) at \pos {};
      \foreach \pos/\name in {
                             {(3,5.5)/j},  {(3,4.5)/k},  {(3,0.5)/n}, {(3,1.5)/o}}
       		\node[feuille2] (\name) at \pos {};
                     \foreach \source/ \dest in {b/a, c/a, d/b,e/b, f/c, g/c, d/h, d/i, e/j, e/k, f/l, f/m, g/n, g/o}
        \path[edge] (\source) -- (\dest);

   \draw[black] (4,0) grid[step=2](12,8);
  \draw[black, thick] (4,0) grid[step=4](12,8);

\draw[fill=black!40] (8,7) rectangle(9,8);
\draw[fill=black!40] (4,6) rectangle(5,7);
\draw[fill=black!40] (11,5) rectangle(12,6);
\draw[fill=black!40] (7,4) rectangle(8,5);
\draw[fill=black!40] (10,3) rectangle(11,4);
\draw[fill=black!40] (6,2) rectangle(7,3);
\draw[fill=black!40] (9,1) rectangle(10,2);
\draw[fill=black!40] (5,0) rectangle(6,1);

    \foreach \pos/\name in {{(8,12)/a}, {(6,11)/b}, {(10,11)/c},
                             {(5,10)/d}, {(7,10)/e}, {(9,10)/f}, {(11,10)/g}}
        		\node[vertex] (\name) at \pos {};
	    \foreach \pos/\name in {
                             {(4.5,9)/h},  {(6.5,9)/j},  {(8.5,9)/l}, {(10.5,9)/n}}
        		\node[feuille1] (\name) at \pos {};
      \foreach \pos/\name in {
                             {(5.5,9)/i},  {(7.5,9)/k},  {(9.5,9)/m}, {(11.5,9)/o}}
       		\node[feuille2] (\name) at \pos {};
    \foreach \source/ \dest in {b/a, c/a, d/b,e/b, f/c, g/c, d/h, d/i, e/j, e/k, f/l, f/m, g/n, g/o}
        \path[edge] (\source) -- (\dest);

\draw  (8,-1) node {${W}=0$};

   \draw[black] (12,0) grid[step=2, ,xshift=1cm](20,8);
  \draw[step=4,black, thick, xshift=1cm] (12,0) grid (20,8);
\draw  (17,-1) node {${FGW}>0$};

    \foreach \pos/\name in {{(8+9,12)/a}, {(6+9,11)/b}, {(10+9,11)/c},
                             {(5+9,10)/d}, {(7+9,10)/e}, {(9+9,10)/f}, {(11+9,10)/g}}
        		\node[vertex] (\name) at \pos {};
	    \foreach \pos/\name in {
                             {(4.5+9,9)/h},  {(6.5+9,9)/j},  {(8.5+9,9)/l}, {(10.5+9,9)/n}}
        		\node[feuille1] (\name) at \pos {};
      \foreach \pos/\name in {
                             {(5.5+9,9)/i},  {(7.5+9,9)/k},  {(9.5+9,9)/m}, {(11.5+9,9)/o}}
       		\node[feuille2] (\name) at \pos {};
    \foreach \source/ \dest in {b/a, c/a, d/b,e/b, f/c, g/c, d/h, d/i, e/j, e/k, f/l, f/m, g/n, g/o}
        \path[edge] (\source) -- (\dest);

   \draw[black] (24,0) grid[step=2, ,xshift=-2cm](32,8);
  \draw[step=4,black, thick, xshift=-2cm] (24,0) grid (32,8);

\draw[fill=black!40] (17,7) rectangle(18,8);
\draw[fill=black!40] (19,6) rectangle(20,7);
\draw[fill=black!40] (18,5) rectangle(19,6);
\draw[fill=black!40] (20,4) rectangle(21,5);
\draw[fill=black!40] (15,3) rectangle(16,4);
\draw[fill=black!40] (13,2) rectangle(14,3);
\draw[fill=black!40] (14,1) rectangle(15,2);
\draw[fill=black!40] (16,0) rectangle(17,1);

    \foreach \pos/\name in {{(8+18,12)/a}, {(6+18,11)/b}, {(10+18,11)/c},
                             {(5+18,10)/d}, {(7+18,10)/e}, {(9+18,10)/f}, {(11+18,10)/g}}
        		\node[vertex] (\name) at \pos {};
	    \foreach \pos/\name in {
                             {(4.5+18,9)/h},  {(6.5+18,9)/j},  {(8.5+18,9)/l}, {(10.5+18,9)/n}}
        		\node[feuille1] (\name) at \pos {};
      \foreach \pos/\name in {
                             {(5.5+18,9)/i},  {(7.5+18,9)/k},  {(9.5+18,9)/m}, {(11.5+18,9)/o}}
       		\node[feuille2] (\name) at \pos {};
    \foreach \source/ \dest in {b/a, c/a, d/b,e/b, f/c, g/c, d/h, d/i, e/j, e/k, f/l, f/m, g/n, g/o}
        \path[edge] (\source) -- (\dest);
\draw  (26,-1) node {${GW}=0$};

\draw[fill=black!40] (24,7) rectangle(25,8);
\draw[fill=black!40] (25,6) rectangle(26,7);
\draw[fill=black!40] (22,5) rectangle(23,6);
\draw[fill=black!40] (23,4) rectangle(24,5);
\draw[fill=black!40] (27,3) rectangle(28,4);
\draw[fill=black!40] (26,2) rectangle(27,3);
\draw[fill=black!40] (28,1) rectangle(29,2);
\draw[fill=black!40] (29,0) rectangle(30,1);

\end{tikzpicture}
\vspace{-2mm}

\caption{Example of $FGW$, $GW$ and $W$ on synthetic trees. Dark grey color represents a non null $\pi_{i,j}$ value between two nodes $i$ and $j$. (Left) the $W$ distance between the features with $\alpha=0$, (Middle)  $FGW$  (Right) the $GW$ between the structures $\alpha=1$. \label{mapstoy}}
\end{figure}

\subsection{Graph-structured data classification}

\textbf{Datasets}  {We consider 12 widely used benchmark datasets divided into 3 groups. BZR, COX2 \cite{cox2bzr},  PROTEINS, ENZYMES  \cite{enzymes}, CUNEIFORM \cite{kriege.cuneiform} and SYNTHETIC \cite{NIPS2013_5155} are vector attributed graphs. MUTAG  \cite{doi:10.1021/jm00106a046}, PTC-MR \cite{DBLP:journals/corr/KriegeGW16} and NCI1 \cite{nci1} contain graphs with discrete attributes derived from small molecules.   IMDB-B, IMDB-M \cite{Yanardag15} contain unlabeled graphs derived from social networks. All datas are available in \cite{KKMMN2016}. }

\textbf{Experimental setup} {Regarding the feature distance matrix $M_{AB}$
between node features, when dealing with real valued vector attributed graphs, we consider the $\ell_{2}$
 distance between the labels of the vertices. 
In the case of graphs with discrete attributes, we consider two settings: in the first one, we keep the original labels (denoted as \textsc{raw}); we also consider a Weisfeiler-Lehman labeling (denoted as \textsc{wl}) by concatenating the  labels of the neighbors. A vector of size \textsc{h} is created by repeating this procedure \textsc{h} times ~\cite{wlkernel,DBLP:journals/corr/KriegeGW16}. In both cases, we compute the feature distance matrix by using $d(a_{i},b_{j})=\sum_{k=0}^{H} \delta(\tau(a_{i}^{k}),\tau(b_{j}^{k}))$ where $\delta(x,y)=1$ if $x\neq y$ else $\delta(x,y)=0$ and $\tau(a_{i}^{k})$ denotes the concatenated label at iteration $k$ (for $k=0$ original labels are used). Regarding the structure distances $C$, they are computed by considering a
shortest path distance between the vertices. 
  
  For the classification task, we run a SVM using the indefinite kernel matrix $e^{-\gamma {FGW}}$ which
 is seen as a noisy observation of the true positive semidefinite kernel
 \cite{Luss:2007:SVM:2981562.2981682}. We compare classification accuracies with the following
 state-of-the-art graph kernel methods:}  (SPK) denotes the shortest path kernel
 \cite{enzymes}, (RWK) the random walk kernel \cite{Gartner03ongraph}, (WLK) the
 Weisfeler Lehman kernel \cite{wlkernel}, (GK) the graphlet count kernel
 \cite{Shervashidze09efficientgraphlet}. For real valued vector attributes, we
consider the HOPPER kernel (HOPPERK) \cite{NIPS2013_5155} and the propagation kernel
 (PROPAK) \cite{Neumann2016}. We build upon the GraKel library \cite{2018arXiv180602193S} to construct the kernels and C-SVM to perform the classification. We
 also compare $FGW$ with the PATCHY-SAN framework for  CNN on graphs 
 \cite{pmlr-v48-niepert16}(PSCN) building on our own implementation of the method.

{To provide compare between the methods, most papers about graph classification usually perform a nested cross validation (using 9 folds for training, 1 for testing, and reporting the average accuracy of this experiment repeated 10 times) and report accuracies of the other methods taken from the original papers. However, these comparisons are not fair because of the high variance on most datasets \textit{w.r.t.} the folds chosen for training and testing. This is why, in our experiments, the nested cross validation is performed on the same folds for training and testing for \emph{all} methods. In the result tables \ref{tab:vec},\ref{tab:disc} and \ref{tab:no} we add a (*) when the best score does not yield to a significative improvement (based on a Wilcoxon signed rank test on the test scores) compared to the second best one. Note that, because of their small sizes, we repeat the experiments 50 times for MUTAG and PTC-MR datasets. For all methods using SVM, we cross validate the parameter $C \in \{10^{-7},10^{-6},...,10^{7}\}$. The range of the WL parameter \textsc{h} is $\{0,1...,10\}$, and we also compute this kernel with \textsc{h} fixed at $2,4$. The decay factor $\lambda$ for RWK $\{10^{-6},10^{-5}...,10^{-2}\}$, for the GK kernel we set the graphlet size $\kappa =3$ and cross validate the precision level $\epsilon$ and the confidence $\delta$ as in the original paper \cite{Shervashidze09efficientgraphlet}. The $t_{\text{max}}$ parameter for PROPAK is chosen within $\{1,3,5,8,10,15,20\}$. For PSCN, we choose the normalized betweenness centrality as labeling procedure and cross validate the batch size in $\{10, 15,...,35\}$ and number of epochs in $\{10, 20,..., 100\}$. Finally for ${FGW}$, $\gamma$ is cross validated within $\{2^{-10},2^{-9},...,2^{10}\}$ and $\alpha$ is cross validated \textit{via} a logspace search in $[0,0.5]$ and symmetrically $[0.5,1]$ (15 values are drawn).

\begin{table*}[ht]
    \caption{Average classification accuracy on the graph datasets with vector attributes.\label{tab:vec}}
    \begin{center}
\resizebox{1\linewidth}{!}{
\begin{sc}
    \setlength{\tabcolsep}{4pt}
\begin{tabular}{lllllll}
\toprule
{Vector attributes} &                   BZR &          COX2 &     CUNEIFORM &       ENZYMES &        PROTEIN &      SYNTHETIC \\
\midrule
\midrule
\small{FGW sp}             &  \textbf{85.12$\pm$4.15}* &  77.23$\pm$4.86 &  \textbf{76.67$\pm$7.04} &  71.00$\pm$6.76 &   \textbf{74.55$\pm$2.74} &  \textbf{100.00$\pm$0.00} \\
\midrule
\small{HOPPERK}               &  84.15$\pm$5.26 &  \textbf{79.57$\pm$3.46} &  32.59$\pm$8.73 &  45.33$\pm$4.00 &   71.96$\pm$3.22 &   90.67$\pm$4.67 \\
\small{PROPAK}                 &   79.51$\pm$5.02 &  77.66$\pm$3.95 &  12.59$\pm$6.67 &  \textbf{71.67$\pm$5.63}* &   61.34$\pm$4.38 &   64.67$\pm$6.70 \\\midrule
\small{PSCN k=10}                                 &  80.00$\pm$4.47 &  71.70$\pm$3.57 &  25.19$\pm$7.73 &  26.67$\pm$4.77 &  67.95$\pm$11.28 &  \textbf{100.00$\pm$0.00} \\
\small{PSCN k=5 }                                 &  82.20$\pm$4.23 &  71.91$\pm$3.40 &  24.81$\pm$7.23 &  27.33$\pm$4.16 &   71.79$\pm$3.39 &  \textbf{100.00$\pm$0.00} \\
\bottomrule
\end{tabular}
\end{sc}}
\end{center}
\end{table*}

\begin{table}[t!]
    \caption{Average classification accuracy on the graph datasets with discrete attributes.\label{tab:disc}}
    \vspace{1.5mm}
    \resizebox{1\linewidth}{!}{
\begin{sc}
    \setlength{\tabcolsep}{4pt}
\begin{tabular}{llll}
\toprule
{Discrete attr.} &          MUTAG &          NCI1 &           PTC-MR \\
\midrule
\midrule
FGW raw sp               &  83.26$\pm$10.30 &     72.82$\pm$1.46 &  55.71$\pm$6.74 \\
FGW wl h=2 sp       &   86.42$\pm$7.81 &          85.82$\pm$1.16 &  {63.20$\pm$7.68} \\
FGW wl  h=4 sp       &   \textbf{88.42$\pm$5.67} &          \textbf{86.42$\pm$1.63} & \textbf{65.31$\pm$7.90} \\
\midrule
GK k=3                              &   82.42$\pm$8.40 &    60.78$\pm$2.48 &  56.46$\pm$8.03 \\
RWK             &   79.47$\pm$8.17 &         58.63$\pm$2.44 &  55.09$\pm$7.34 \\
SPK             &   82.95$\pm$8.19 &  74.26$\pm$1.53 &          60.05$\pm$7.39 \\
WLK               &   86.21$\pm$8.48 &          85.77$\pm$1.07 &  62.86$\pm$7.23 \\
WLK h=2                              &   86.21$\pm$8.15 &     81.85$\pm$2.28 &  61.60$\pm$8.14 \\
WLK h=4                              &   83.68$\pm$9.13 &     85.13$\pm$1.61 &
62.17$\pm$7.80 \\\midrule
PSCN k=10                           &  83.47$\pm$10.26 &          70.65$\pm$2.58 &  58.34$\pm$7.71 \\
PSCN k=5                            &  83.05$\pm$10.80 &          69.85$\pm$1.79 &  55.37$\pm$8.28 \\
\bottomrule
\end{tabular}
\quad
\end{sc}}
\end{table}

\begin{table}[t!]
    \caption{Average classification accuracy on the graph datasets with no attributes.\label{tab:no}}
    \begin{center}
    \begin{sc}
    \begin{tabular}{lll}
    \toprule
    {Without attribute} &        IMDB-B &       IMDB-M \\
    \midrule
    \midrule
    GW sp  &  \textbf{63.80$\pm$3.49} &  \textbf{48.00$\pm$3.22} \\
    \midrule
    GK k=3                 &  56.00$\pm$3.61 &  41.13$\pm$4.68 \\
    SPK &  55.80$\pm$2.93 &  38.93$\pm$5.12 \\
    \bottomrule
    \end{tabular}
    \end{sc}
    \end{center}
\end{table}

\textbf{Results and discussion}

\textbf{Vector attributed graphs.} The average accuracies reported in Table \ref{tab:vec} show that FGW is a clear state-of-the-art method and
performs best on 4 out of 6 datasets with performances in the error bars of the
best methods on the other two datasets. Results for CUNEIFORM are significantly below those from the original paper \cite{kriege.cuneiform} which can be explained by the fact that the method in this paper uses a graph convolutional approach specially designed for this dataset and that experiment settings are different. In comparison, the other competitive methods are less consistent as they exhibit some good performances on some datasets only.

\begin{figure}[t!]
    \begin{center}
        \includegraphics[width=0.9\linewidth]{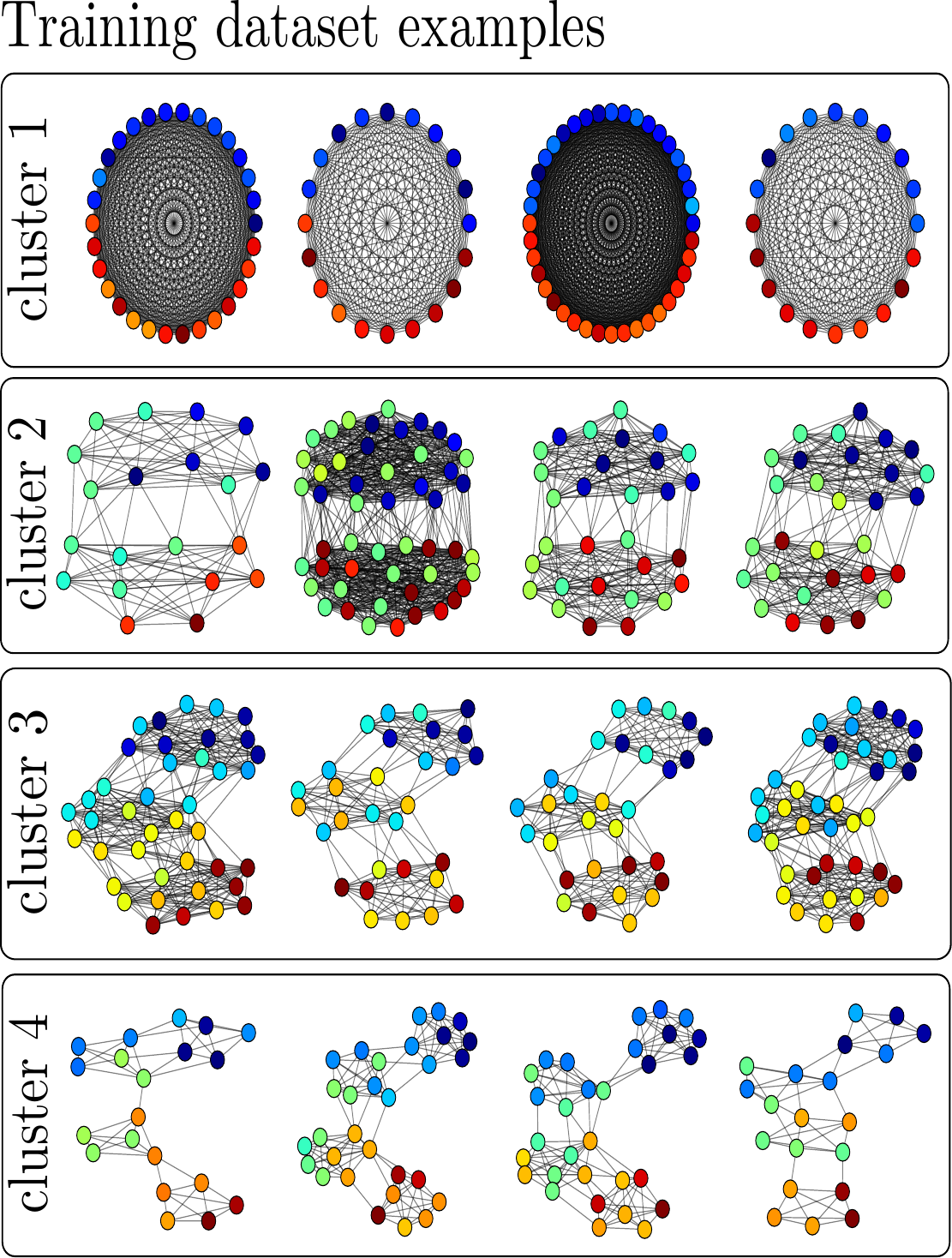} 
    \end{center}
\caption{\label{fig:graphclustr1} {Examples from the clustering dataset, color indicates the labels. } }
\end{figure}
\textbf{Discrete labeled graphs.} We first note in Table \ref{tab:disc} that $FGW$ using WL attributes outperforms all competitive methods, including $FGW$ with raw features. Indeed, the WL attributes allow encoding more finely the neighborood of the vertices by stacking their attributes, whereas FGW with raw features only consider the shortest path distance between vertices, not their sequence of labels. This result calls for using meaningful feature and/or structure matrices in the FGW definition, that can be dataset-dependant, in order to enhance the performances. We also note that $FGW$ with WL attributes outperforms the WL kernel method, highlighting the benefit of an optimal transport-based distance over a kernel-based similarity. Surprisingly results of PSCN are significantly lower than those from the original paper. We believe that it comes from the difference between the folds assignment for training and testing, which suggests that PSCN is difficult to tune.

\textbf{Non-attributed graphs.} The particular case of the GW distance for graph classification is also illustrated on social
datasets, that contain no labels on the vertices. Accuracies reported in Table \ref{tab:no} show that it greatly
outperforms SPK and GK graph kernel methods. This is, to the best of our
knowledge, the first application of GW for social graph classification.

\textbf{Comparison between $FGW$, $W$ and $GW$} During
the validation step, the optimal value of $\alpha$ was consistently selected
inside the $]0,1[$ interval, excluding $0$ and $1$, suggesting that both structure and feature pieces of
information are necessary (details are given in the supplementary material).

\begin{figure}[t!]
    \includegraphics[width=0.9\linewidth]{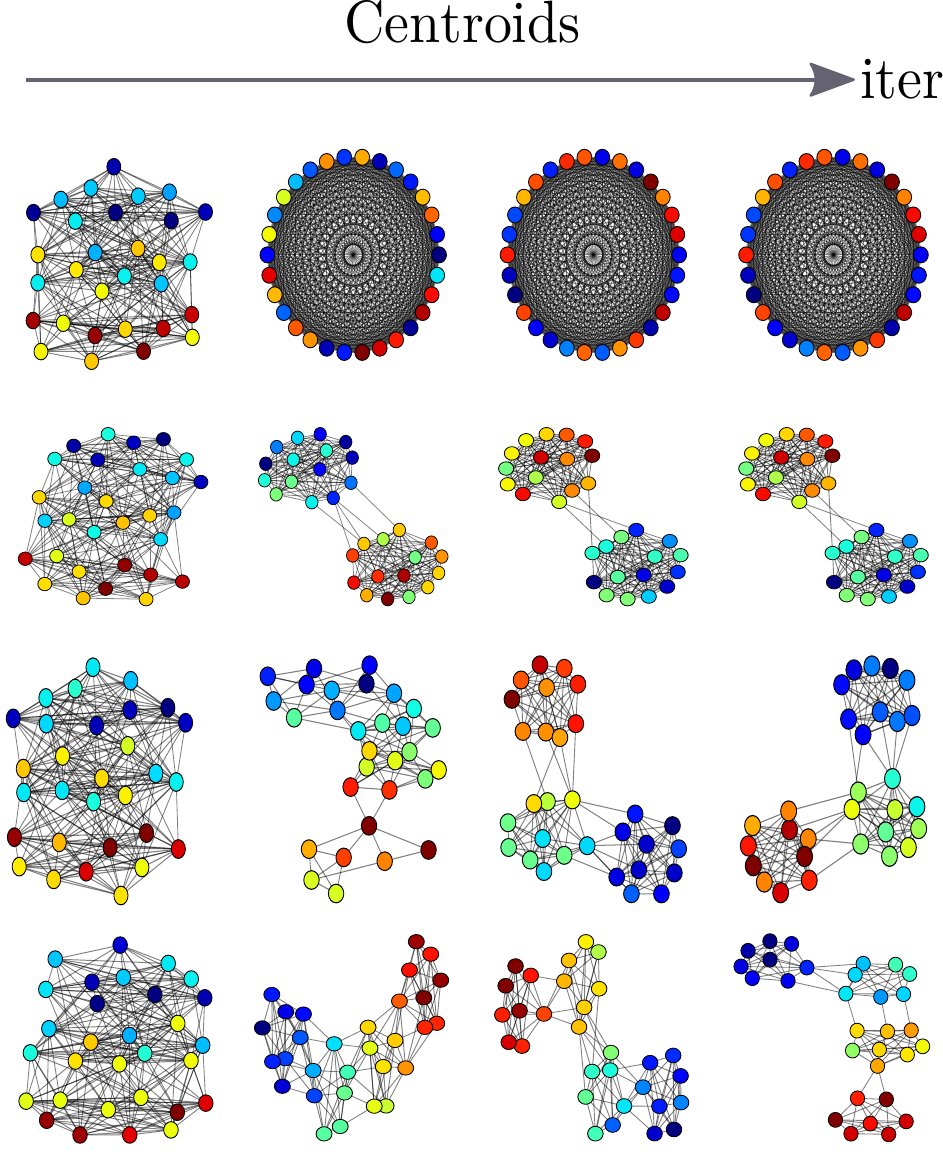} 
    \caption{\label{fig:graphclustr2} {Evolution of the centroids of each cluster in the $k$-means clustering, from (Left) the random initialization (Right) until convergence to the final centroid.} }
    \end{figure}

\subsection{Unsupervised learning: graphs clustering}

{In the last experiment, we evaluate the ability of $FGW$ to perform a
clustering of multiple graphs and to retrieve meaningful barycenters of such
clusters. To do so, we generate a dataset of 4 groups of community graphs. Each
graph follows a simple Stochastic Block Model
\cite{wang1987stochastic,nowicki2001estimation} and the groups are defined 
\textit{w.r.t.} the number of communities inside each graph and the distribution
of their labels. The dataset is composed of 40 graphs (10 graphs per group) and
the number of nodes of each graph is drawn randomly from $\{20,30,...,50\}$
as illustrated in Fig.  \ref{fig:graphclustr1}. We perform a $k$-means clustering using the ${FGW}$ barycenter defined
in eq. \eqref{fgwbarycenter} as the centroid of the groups and the ${FGW}$
distance for the cluster assignment. We fix the number of nodes of each centroid to 30.
We perform a thresholding on the pairwise similarity matrix $C$ of the centroid at
the end in order to obtain an adjacency matrix for visualization purposes. The threshold value is empirically chosen so as to minimize the distance induced by the frobenius norm between the original matrix $C$ and the shortest path matrix obtained from the adjency matrix.   
The evolution of the barycenters along the iterations is reported in  Figure~\ref{fig:graphclustr2}. We can see that these centroids recover
community structures and feature distributions that are representative of their
cluster content. On this example, note that the clustering recovers perfectly
the known groups in the dataset.
To the best of our knowledge, there exists no other method able to perform a clustering of
graphs and to retrieve the average graph in each cluster without having to solve a
pre-image problem.

\section{Discussion and conclusion}

Countless problems in machine learning involve structured data, usually stressed in light of the graph formalism. We consider here labeled graphs enriched by an histogram, which naturally leads to represent structured data as probability measures in the joint space of their features and structures. Widely known for their ability to meaningfully compare probability measures, transportation distances  are generalized in this paper so as to be suited in the context of structured data, motivating the so-called Fused Gromov-Wasserstein distance. We theoretically prove that it defines indeed a distance on structured data, and consequently on graphs of arbitrary sizes. $FGW$ provides a natural framework for {analysis} of labeled graphs as we demonstrate on classification, where it reaches and surpasses most of the time the state-of-the-art performances, and in graph-based $k$-means where we develop a novel approach to represent the clusters centroids  using a {barycentric} formulation of $FGW$. We believe that this metric can have a significant impact on challenging graph signal analysis problems.

{While we considered a unique measure of distance between nodes in the graph structure (shortest path), other choices could be made with respect to the problem at hand, or eventually learned in an end-to-end manner. The same applies to the distance between features.} 
{We also envision a potential use of this distance in deep learning applications where a distance between graph is needed (such as graph auto-encoders). Another line of work will also try to lower the computational complexity of the underlying optimization problem to ensure better scalability to very large graphs.}

\section*{Acknowledgements}

This work benefited from the support from  OATMIL ANR-17-CE23-0012 project of the French National Research Agency (ANR). We gratefully acknowledge the support of NVIDIA Corporation with the donation of the Titan X GPU used for this research.

\bibliographystyle{icml2019}

\section{Supplementary Material}

\subsection{Proofs}

First we recall the notations from the paper :

Let two graphs $\mathcal{G}_1$ and  $\mathcal{G}_2$ described respectively by their probability measure $\mu= \sum_{i=1}^{n} h_{i} \delta_{(x_{i},a_{i})}$ and $\nu= \sum_{i=1}^{m} g_{j} \delta_{(y_{j},b_{j})}$, where  $h \in \Sigma_{n}$ and $g \in \Sigma_{m}$ are histograms with $\Sigma_{n}=\{h \in (\mathbb{R}^{*}_{+})^{n},\sum_{i=1}^{n} h_{i}=1,\}$. 

We introduce $\couplingset(h,g)$ the set of all admissible couplings between $h$ and $g$, \emph{i.e.} the set 
$$\couplingset(h,g)=\{ \pi \in \mathbb{R}_{+}^{n \times m} \ \textit{s.t.} \ \sum_{i=1}^n\pi_{i,j}=h_j \ , \sum_{j=1}^m\pi_{i,j}=g_i \},$$ 
where $\pi_{i,j}$ represents the amount of mass shifted from the bin $h_{i}$ to $g_{j}$ for a coupling $\pi$.

Let $(\Omega_{f},d)$ be a compact measurable space acting as the feature space. We denote the distance between the features as $M_{AB}=(d(a_{i},b_{j}))_{i,j}$, a $n \times m$ matrix. 

The structure matrices are denoted $C_{1}$ and $C_{2}$, and $\mu_{X}$ and $\mu_{A}$ (resp. $\nu_{Y}$ and $\nu_{B}$) the marginals of $\mu$ (resp. $\nu$) \textit{w.r.t.} the structure and feature respectively.
We also define the similarity between the structures by measuring the similarity between all pairwise distances within each graph thanks to the 4-dimensional tensor $L(C_{1},C_{2})$:
\begin{equation*}
    L_{i,j,k,l}(C_{1},C_{2})=|C_{1}(i,k)-C_{2}(j,l)| .
\end{equation*}

We also consider the following notations :

\begin{equation}
\label{lossgromov}
J_{q}(C_1,C_2,\pi)=\sum_{i,j,k,l} L_{i,j,k,l}(C_{1},C_{2})^{q} \pi_{i,j} \, \pi_{k,l}
\end{equation}

\begin{equation}
\label{losswass}
H_{q}(M_{AB},\pi)=\sum_{i,j} d(a_{i},b_{j})^{q} \pi_{i,j}
\end{equation}

\begin{equation}
\label{lossfgw}
{\small
\begin{split}
E_{q}(M_{AB},C_{1},C_{2},\pi) = \big\langle (1-\alpha) M_{AB}^{q} + \alpha L(C_{1},C_{2})^{q} \otimes \pi, \pi \big\rangle \\
 =\sum_{i,j,k,l} (1-\alpha) d(a_{i},b_{j})^{q}+\alpha L_{i,j,k,l}(C_{1},C_{2})^{q} \pi_{i,j}\pi_{k,l}
\end{split}
}
\end{equation}

Respectively $J_{q}$, $H_{q}$ and $E_{q}$ designate the Gromov-Wasserstein ($GW$) loss, the Wasserstein ($W$) loss and the $FGW$ loss so that :

\begin{equation}
\label{discretefgw}
\fgwdistance= \underset{\pi \in \couplingset(h,g)}{\min}E_{q}(M_{AB},C_{1},C_{2},\pi)  \\
\end{equation}

\begin{equation}
\label{discretewass}
W_{q}(\mu_{A},\nu_{B})^{q}=\underset{\pi \in \couplingset(h,g)}{\min} H_{q}(M_{AB},\pi)
\end{equation}

\begin{equation}
\label{discretegromov}
GW_{q}(\mu_{X},\nu_{Y})^{q} = \underset{\pi \in \couplingset(h,g)}{\min} J_{q}(C_1,C_2,\pi)
\end{equation}

Please note that the minimum exists since we minimize a continuous function over a compact subset of $\mathbb{R}^{n\times m}$ and hence the $FGW$ distance is well defined.


\subsubsection{Bounds}

We first introduce the following lemma:

\begin{lemma}
$\fgwdistance$ is lower-bounded by the straight-forward interpolation between $W_{q}(\mu_{A},\nu_{B})^{q}$ and $GW_{q}(\mu_{X},\nu_{Y})^{q}$:

\begin{equation}
\label{ineqwassfgw}
\fgwdistance \geq (1-\alpha)W_{q}(\mu_{A},\nu_{B})^{q} + \alpha GW_{q}(\mu_{X},\nu_{Y})^{q}
\end{equation}
	
\end{lemma}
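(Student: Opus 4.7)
The plan is straightforward: exploit the fact that the FGW objective $E_q$ is, by construction, an $\alpha$-convex combination of the Wasserstein cost $H_q$ and the Gromov--Wasserstein cost $J_q$ evaluated on the \emph{same} coupling, and then observe that any admissible coupling for FGW is simultaneously admissible for both $W_q$ and $GW_q$.

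First, I would let $\pi^{*} \in \couplingset(h,g)$ be an optimal coupling for \eqref{discretefgw}, whose existence is guaranteed by compactness (as noted in the excerpt). Expanding the definition \eqref{lossfgw} of $E_q$ and separating the two terms in the sum yields the identity
\begin{equation*}
E_{q}(M_{AB},C_{1},C_{2},\pi^{*}) = (1-\alpha)\, H_{q}(M_{AB},\pi^{*}) + \alpha\, J_{q}(C_{1},C_{2},\pi^{*}),
\end{equation*}
which is the crucial linearity observation. Here I am using that $\sum_{k,l} \pi^{*}_{k,l} = 1$, so the tensorized $M_{AB}^q$ term collapses to the usual bilinear Wasserstein cost.

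Next, I would check admissibility. Since $\mu = \sum_i h_i \delta_{(x_i,a_i)}$, the marginals $\mu_A$ and $\mu_X$ share the same weight vector $h$; likewise $\nu_B$ and $\nu_Y$ share $g$. Hence $\pi^{*} \in \couplingset(h,g)$ is a feasible coupling both for the Wasserstein problem \eqref{discretewass} between $\mu_A$ and $\nu_B$, and for the Gromov--Wasserstein problem \eqref{discretegromov} between $\mu_X$ and $\nu_Y$. By optimality of the respective minima,
\begin{equation*}
H_{q}(M_{AB},\pi^{*}) \geq W_{q}(\mu_{A},\nu_{B})^{q}, \qquad J_{q}(C_{1},C_{2},\pi^{*}) \geq GW_{q}(\mu_{X},\nu_{Y})^{q}.
\end{equation*}

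Combining with the identity above, and using $1-\alpha \geq 0$ and $\alpha \geq 0$, gives
\begin{equation*}
\fgwdistance = E_{q}(M_{AB},C_{1},C_{2},\pi^{*}) \geq (1-\alpha) W_{q}(\mu_{A},\nu_{B})^{q} + \alpha\, GW_{q}(\mu_{X},\nu_{Y})^{q},
\end{equation*}
which is the claimed inequality \eqref{ineqwassfgw}. I do not anticipate any real obstacle here; the only subtlety is the marginal-consistency check, which is essentially bookkeeping on the structured-data representation $\mu = \sum_i h_i \delta_{(x_i,a_i)}$. Note that equality in general fails, because the optimal couplings for $W_q$ and $GW_q$ are typically distinct from each other and from $\pi^{*}$.
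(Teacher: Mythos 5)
Your proof is correct and follows essentially the same route as the paper's: take the optimal FGW coupling, split $E_q$ into $(1-\alpha)H_q + \alpha J_q$ on that same coupling, and bound each term below by the corresponding optimal value using feasibility. The extra remarks on the collapse of the tensorized feature term and on marginal consistency are just explicit versions of steps the paper leaves implicit.
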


\begin{proof}
	Let $\pi^\alpha$ be the coupling that minimizes $E_{q}(M_{AB},C_{1},C_{2},\cdot)$.
	Then we have:
	\begin{eqnarray}
		\fgwdistance & = & E_{q}(M_{AB},C_{1},C_{2},\pi^\alpha) \nonumber \\
		& = & (1 - \alpha) H_{q}(M_{AB},\pi^\alpha) + \alpha J_{q}(C_1,C_2,\pi^\alpha) \nonumber
	\end{eqnarray}
	But also:	
	\begin{eqnarray}
		W_{q}(\mu_{A},\nu_{B})^{q} & \leq & H_{q}(M_{AB},\pi^\alpha) \nonumber \\
		GW_{q}(\mu_{X},\nu_{Y})^{q} & \leq & J_{q}(C_1,C_2,\pi^\alpha) \nonumber
	\end{eqnarray}
	The provided inequality is then derived.
\end{proof}

We also have two other straight-forward lower bounds for $FGW$:

\begin{equation}
\label{firstineq}
\fgwdistance \geq (1-\alpha) \ W_{q}(\mu_{A},\nu_{B})^{q} 
\end{equation}

\begin{equation}
\label{secondineq}
\fgwdistance \geq \alpha \ GW_{q}(\mu_{X},\nu_{Y})^{q}
\end{equation}

\subsubsection{Interpolation properties}

We now claim the following theorem:

\begin{theorem}{Interpolation properties.}
\label{interpolationtheorem}

\noindent As $\alpha$ tends to zero, the FGW distance recovers ${W}_{q}(\mu_{A},\nu_{B})^{q}$ between the features, and as $\alpha$ tends to one, we recover ${GW}_{q}(\mu_{X},\nu_{Y})^{q}$ between the structures:
\begin{equation*}
\begin{split}
\lim\limits_{\alpha \rightarrow 0} \fgwdistance&=W_{q}(\mu_{A},\nu_{B})^{q} \\
\lim\limits_{\alpha \rightarrow 1}\fgwdistance&=GW_{q}(\mu_{X},\nu_{Y})^{q}
\end{split}
\end{equation*}

\end{theorem}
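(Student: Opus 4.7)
The plan is to sandwich $\fgwdistance$ between matching lower and upper bounds that converge to the claimed limits, and then conclude by a squeeze argument. The two directions are symmetric, so I describe the case $\alpha \to 0$ in detail and indicate the changes for $\alpha \to 1$.

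For the lower bound, I would reuse the interpolation inequality already established in the previous lemma, namely
\[
\fgwdistance \;\geq\; (1-\alpha)\,W_{q}(\mu_{A},\nu_{B})^{q} + \alpha\,GW_{q}(\mu_{X},\nu_{Y})^{q}.
\]
Letting $\alpha \to 0$ (resp.\ $\alpha \to 1$) on the right-hand side immediately gives $\liminf_{\alpha \to 0} \fgwdistance \geq W_{q}(\mu_{A},\nu_{B})^{q}$ (resp.\ $\liminf_{\alpha \to 1} \fgwdistance \geq GW_{q}(\mu_{X},\nu_{Y})^{q}$), with no further work.

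For the matching upper bound, the idea is to test the variational definition of $FGW$ against a well-chosen admissible coupling. Let $\pi^{W} \in \couplingset(h,g)$ be any minimizer of $H_{q}(M_{AB},\cdot)$ (it exists since $\couplingset(h,g)$ is compact in $\mathbb{R}^{n\times m}$ and $H_{q}$ is continuous). Since $\pi^{W}$ is admissible,
\[
\fgwdistance \;\leq\; E_{q}(M_{AB},C_{1},C_{2},\pi^{W}) \;=\; (1-\alpha)\,W_{q}(\mu_{A},\nu_{B})^{q} + \alpha\,J_{q}(C_{1},C_{2},\pi^{W}).
\]
The point here is that $J_{q}(C_{1},C_{2},\pi^{W})$ is a single finite number that does not depend on $\alpha$, so the right-hand side converges to $W_{q}(\mu_{A},\nu_{B})^{q}$ as $\alpha \to 0$. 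Combined with the lower bound, the squeeze theorem yields the first limit. For $\alpha \to 1$, I would instead plug a $GW$-optimal coupling $\pi^{GW}$ into $E_{q}$, obtaining the symmetric bound $\fgwdistance \leq (1-\alpha)\,H_{q}(M_{AB},\pi^{GW}) + \alpha\,GW_{q}(\mu_{X},\nu_{Y})^{q}$, and conclude identically.

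There is no serious obstacle in this argument: everything is finite-dimensional and the underlying matrices $C_{1}, C_{2}, M_{AB}$ are bounded, so the only thing one has to observe is that the ``crossed'' terms $J_{q}(C_{1},C_{2},\pi^{W})$ and $H_{q}(M_{AB},\pi^{GW})$ are bounded constants in $\alpha$, which makes them harmless in the limit. The mildly subtle point is that one must test against a \emph{fixed} optimizer (of $W$ or $GW$) rather than the $\alpha$-dependent optimizer of $E_{q}$ itself, so that the coefficient in front of the vanishing term really is an $\alpha$-independent constant; this is the only real modeling choice in the proof.
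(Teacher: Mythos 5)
Your proof is correct and follows essentially the same route as the paper: the upper bound is obtained exactly as in the paper by testing $E_q$ against the fixed $W$-optimal (resp.\ $GW$-optimal) coupling, whose ``crossed'' term is an $\alpha$-independent constant, and the conclusion follows by squeezing. The only cosmetic difference is that you invoke the full interpolation lower bound $(1-\alpha)W_q^q+\alpha GW_q^q$ where the paper uses the simpler one-sided bounds $\fgwdistance\geq(1-\alpha)W_q^q$ and $\fgwdistance\geq\alpha GW_q^q$; both yield the same limits.
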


\begin{proof}
Let $\pi^{W} \in \couplingset(h,g)$ be the optimal coupling for the Wasserstein distance $W_{q}(\mu_{A},\nu_{B})$ between $\mu_{A}$ and $\nu_{B}$ and let $\pi^{\alpha} \in \couplingset(h,g)$ be the optimal coupling for the $FGW$ distance $\fgwdistance$. We consider :
\begin{equation*}
\begin{split}
&\fgwdistance-(1-\alpha)W_{q}(\mu_{A},\nu_{B})^{q} \\
&= E_{q}(M_{AB},C_{1},C_{2},\pi^{\alpha})- (1-\alpha) H_{q}(M_{AB},\pi^{W}) \\
&\stackrel{*}{\leq} E_{q}(M_{AB},C_{1},C_{2},\pi^{W})- (1-\alpha) H_{q}(M_{AB},\pi^{W}) \\
&=\sum_{i,j,k,l} \alpha |C_{1}(i,k)-C_{2}(j,l)|^{q} \pi^{W}_{i,j}\pi^{W}_{k,l} \\
&= \alpha J_{q}(C_1,C_2,\pi^{W}) \\
\end{split}
\end{equation*}

In (*) we used the suboptimality of the coupling $\pi^{W}$ \textit{w.r.t} the $FGW$ distance. 
In this way we have proven :

\begin{equation}
\label{ineqwassfgw}
\fgwdistance \leq (1-\alpha)W_{q}(\mu_{A},\nu_{B})^{q} + \alpha J_{q}(C_1,C_2,\pi^{W})
\end{equation}

Now let $\pi^{GW} \in \couplingset(h,g)$ the optimal coupling for the Gromov-Wasserstein distance $GW_{q}(\mu_{X},\nu_{Y})$ between $\mu_{X}$ and $\nu_{Y}$. Then :
\begin{equation*}
\begin{split}
&\fgwdistance-\alpha GW_{q}(\mu_{X},\nu_{Y})^{q} \\
&= E_{q}(M_{AB},C_{1},C_{2},\pi^{\alpha})- \alpha J_{q}(C_1,C_2,\pi^{GW}) \\
&\stackrel{*}{\leq} E_{q}(M_{AB},C_{1},C_{2},\pi^{GW})- \alpha J_{q}(C_1,C_2,\pi^{GW}) \\
&=(1-\alpha) \sum_{i,j,k,l} (1-\alpha) d(a_{i},b_{j})^{q}\pi^{GW}_{i,j} \\
&= (1-\alpha) H_{q}(M_{AB},\pi^{GW}) \\
\end{split}
\end{equation*}

where in (*) we used the suboptimality of the coupling $\pi^{GW}$ \textit{w.r.t} the $FGW$ distance so that : 

\begin{equation}
\label{ineqgromovfgw}
\fgwdistance \leq \alpha GW_{q}(\mu_{X},\nu_{Y})^{q} + (1-\alpha) H_{q}(M_{AB},\pi^{GW})
\end{equation}

As $\alpha$ goes to zero Eq.~\eqref{ineqwassfgw} and Eq.~\eqref{firstineq} give $\lim\limits_{\alpha \rightarrow 0} \fgwdistance=W_{q}(\mu_{A},\nu_{B})^{q}$ and as $\alpha$ goes to one Eq.~\eqref{ineqgromovfgw} and Eq.~\eqref{secondineq} give $\lim\limits_{\alpha \rightarrow 1}\fgwdistance =GW_{q}(\mu_{X},\nu_{Y})^{q}$

\end{proof}


\subsubsection{$FGW$ is a distance}

For the following proofs we suppose that $C_{1}$ and $C_{2}$ are distance matrices, $n\geq m$ and $\alpha \in ]0,...,1[$. We claim the following theorem :

\begin{theorem} ${FGW}$ defines a metric for $q=1$ and a semi-metric for $q >1$.

${FGW}$ defines a metric over the space of structured data quotiented by the measure preserving isometries that are also feature preserving. More precisely, ${FGW}$ satisfies the triangle inequality and is nul \textit{iff} $n=m$ and there exists a bijection $\sigma : \{1,..,n\} \rightarrow \{1,..,n\}$ such that :
\begin{equation}
\label{weightpreserve}
\forall i \in \{1,..,n\}, \  h_{i}=g_{\sigma(i)}
\end{equation}\vspace{-4mm}
\begin{equation}
\label{featurepreserve}
\forall i \in \{1,..,n\}, \ a_{i}=b_{\sigma(i)}
\end{equation}\vspace{-6mm}
\begin{equation}
\label{structurepreserve}
\forall i,k \in  \{1,..,n\}^2,  \ C_{1}(i,k)=C_{2}(\sigma(i),\sigma(k))
\end{equation}
If $q>1$,  the triangle inequality is relaxed by a factor $2^{q-1}$ such that ${FGW}$ defines a semi-metric
\end{theorem}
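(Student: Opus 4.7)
The plan is to check each (semi-)metric property on the quotient space in turn. Non-negativity and symmetry are immediate from the definition of $E_q$: all summands are non-negative, and exchanging the two structured data amounts to transposing $\pi$ without affecting the integrand, using the symmetry of $d$ and of $|\cdot|$.

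For the identity of indiscernibles, in the easy direction I build the coupling $\pi_{i,j}:=h_i\mathbf{1}_{\{j=\sigma(i)\}}$; by \eqref{weightpreserve} this is admissible, and \eqref{featurepreserve}--\eqref{structurepreserve} force every nonzero summand of $E_q$ to carry a zero factor, so $FGW=0$. Conversely, assume $\fgwdistance=0$ with $\alpha\in(0,1)$. I would invoke the lower bound $\fgwdistance \ge (1-\alpha)W_q(\mu_A,\nu_B)^q + \alpha GW_q(\mu_X,\nu_Y)^q$ established in the preceding lemma, which forces both $W_q=0$ and $GW_q=0$. The Wasserstein vanishing yields a one-to-one weight and feature matching, while the Gromov--Wasserstein vanishing provides a measure-preserving isometry between the structure metric-measure spaces. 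Thanks to the standing assumption that the pairs $(x_i,a_i)$ are distinct (and likewise for $(y_j,b_j)$), I can promote these two matchings to a single bijection $\sigma$ that simultaneously satisfies \eqref{weightpreserve}--\eqref{structurepreserve}.

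The triangle inequality is the main step and where I expect the main work. Given three structured data with weights $h,g,k$, let $\pi^{12}$ and $\pi^{23}$ be optimal couplings for $FGW_{q,\alpha}(\mu_1,\mu_2)$ and $FGW_{q,\alpha}(\mu_2,\mu_3)$, and define the glued coupling $S_{i,p}:=\sum_j \pi^{12}_{i,j}\pi^{23}_{j,p}/g_j$, which is easily checked to lie in $\couplingset(h,k)$. The pointwise inequalities $d(a_i,c_p)\le d(a_i,b_j)+d(b_j,c_p)$ and $|C_1(i,k)-C_3(p,r)|\le |C_1(i,k)-C_2(j,l)|+|C_2(j,l)-C_3(p,r)|$, valid for every intermediate $(j,l)$, can be plugged inside the sums defining $E_q(M_{AC},C_1,C_3,S)$. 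For $q=1$, the marginal identities $\sum_p\pi^{23}_{j,p}=g_j$ and $\sum_i\pi^{12}_{i,j}=g_j$ collapse the extraneous indices and the cross terms reassemble into $E_1(M_{AB},C_1,C_2,\pi^{12})+E_1(M_{BC},C_2,C_3,\pi^{23})$; minimality over $\couplingset(h,k)$ then gives the triangle inequality for $FGW_{1,\alpha}$.

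For $q>1$ the same construction applies, but before collapsing one raises each pointwise inequality to the $q$-th power and uses the elementary bound $(u+v)^q\le 2^{q-1}(u^q+v^q)$; this produces the relaxation by the factor $2^{q-1}$, giving the semi-metric conclusion. The main bookkeeping delicacy, and where I expect the cleanest computation to be required, is in the bilinear Gromov-type contribution to $E_q$: the triangle inequality on $|C_1-C_3|$ must be inserted so that the two subsequent marginal collapses (one in each index pair of the four-tensor) occur symmetrically, so that each cross term recombines into a single-coupling Gromov-type term in $\pi^{12}$ or $\pi^{23}$ alone rather than a mixed quantity.
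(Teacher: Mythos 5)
Your triangle-inequality argument (gluing $S=P\,\mathrm{diag}(1/g)\,Q$, pointwise triangle inequalities for $d$ and $|\cdot|$, marginal collapse, and the bound $(u+v)^q\le 2^{q-1}(u^q+v^q)$ for $q>1$) is exactly the paper's proof, and the easy direction of the equality case is also fine. The gap is in the converse direction of the identity of indiscernibles. From $\fgwdistance=0$ you pass, via the lower bound, to $W_q(\mu_A,\nu_B)=0$ and $GW_q(\mu_X,\nu_Y)=0$, and then try to merge the two resulting matchings into one bijection. This step fails: the two vanishing statements are obtained from (a priori) \emph{different} optimal couplings, and knowing them separately is strictly weaker than knowing $\fgwdistance=0$. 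Concretely, take two copies of the path $1\!-\!2\!-\!3$ with shortest-path structure matrices and uniform weights, with features $(r,g,r)$ on the first and $(g,r,r)$ on the second: the feature multisets agree so $W=0$, the structures are identical so $GW=0$, yet no single $\sigma$ satisfies \eqref{featurepreserve} and \eqref{structurepreserve} simultaneously (the only structure isometries are the identity and the flip, and neither preserves the features), so $FGW>0$. Hence the existence of separate feature and structure matchings cannot be "promoted" to a joint one, and the distinctness of the pairs $(x_i,a_i)$ does not help.

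What you actually need to use is that a \emph{single} coupling $\pi^*$ annihilates both $H_q$ and $J_q$ simultaneously. The paper's device for exploiting this is to introduce the combined matrices $\hat{C}_1(i,k)=\tfrac12 C_1(i,k)+\tfrac12 d(a_i,a_k)$ and $\hat{C}_2(j,l)=\tfrac12 C_2(j,l)+\tfrac12 d(b_j,b_l)$, and to show via a convexity/triangle-inequality estimate that $J_q(\hat{C}_1,\hat{C}_2,\pi^*)\le \tfrac12 J_q(C_1,C_2,\pi^*)+M_q(\pi^*)=0$. Then M\'emoli's characterization of $GW=0$ applied to $(\hat{C}_1,\hat{C}_2)$ yields \emph{one} weight-preserving bijection $\sigma$ supporting $\pi^*$; feeding $\sigma$ back into $H_q(M_{AB},\pi^*)=0$ gives $a_i=b_{\sigma(i)}$, and subtracting $d(a_i,a_k)=d(b_{\sigma(i)},b_{\sigma(k)})$ from the $\hat{C}$-isometry recovers $C_1(i,k)=C_2(\sigma(i),\sigma(k))$. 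Without some such coupling of the two pieces of information (or another argument showing directly that $\pi^*$ is supported on the graph of a bijection), your proof of the equality case does not go through.
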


We first prove the equality relation for any $q \geq 1$ and we discuss the triangle inequality in the next section.


\paragraph{Equality relation} 

\begin{theorem} 
For all $q \geq1$, $\fgwdistance=0$ \textit{iff} there exists an application $\sigma : \{1,..,n\} \rightarrow \{1,..,m\}$ which verifies (\ref{weightpreserve}), (\ref{featurepreserve}) and (\ref{structurepreserve})
\end{theorem}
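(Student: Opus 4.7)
The statement is an ``iff'' and I would prove each direction separately; the forward direction is a direct construction, while the converse is the substantive step driven by the distance-matrix hypothesis on $C_1, C_2$.

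\textbf{Easy direction $(\Leftarrow)$.} Suppose $\sigma$ satisfies (\ref{weightpreserve}), (\ref{featurepreserve}) and (\ref{structurepreserve}). I would first observe that such a $\sigma$ must be a bijection and in particular $n=m$: since $h$ and $g$ are strictly positive histograms summing to one, the identity $h_i = g_{\sigma(i)}$ combined with $\sum_i h_i = \sum_j g_j = 1$ forces each $j$ to have exactly one preimage under $\sigma$. I then define the Monge-type coupling $\pi^\star_{i,j} = h_i \,\mathbf{1}[j = \sigma(i)]$, which then lies in $\couplingset(h,g)$. Plugging $\pi^\star$ into $E_q$, the only nonzero summands have $j = \sigma(i)$ and $l = \sigma(k)$; by (\ref{featurepreserve}) the feature term $d(a_i,b_j)^q$ vanishes and by (\ref{structurepreserve}) the structural term $|C_1(i,k)-C_2(j,l)|^q$ vanishes, so $E_q(\pi^\star) = 0$ and hence $\fgwdistance = 0$.

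\textbf{Harder direction $(\Rightarrow)$.} Assume $\fgwdistance = 0$ and let $\pi^\star$ be an optimal coupling attaining this zero (existence was already noted). Because $\alpha \in (0,1)$ and every summand in $E_q$ is nonnegative, each term must vanish: whenever $\pi^\star_{i,j} > 0$ we have $d(a_i, b_j) = 0$, and whenever $\pi^\star_{i,j} > 0$ and $\pi^\star_{k,l} > 0$ we have $C_1(i,k) = C_2(j,l)$. The critical step is to argue that the support of $\pi^\star$ is the graph of a bijection. Suppose $\pi^\star_{i,j_1} > 0$ and $\pi^\star_{i,j_2} > 0$. Taking $k = i$ and $l = j_2$ in the structural equation yields $C_2(j_1, j_2) = C_1(i,i) = 0$; because $C_2$ is a distance matrix, this forces $j_1 = j_2$. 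By the symmetric argument using $C_1$, each column of $\pi^\star$ has at most one positive entry. Positivity of the marginals $h$ and $g$ then forces every row and every column to have exactly one positive entry, hence $n = m$, and one obtains a bijection $\sigma$ with $\pi^\star_{i,\sigma(i)} = h_i = g_{\sigma(i)}$; this is (\ref{weightpreserve}). Property (\ref{featurepreserve}) follows from the feature vanishing at each $(i,\sigma(i))$, and (\ref{structurepreserve}) follows from the structural vanishing at $(i, \sigma(i), k, \sigma(k))$ for arbitrary $i,k$.

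\textbf{Main obstacle.} The substantive work lies in the converse and hinges entirely on the distance-matrix hypothesis for $C_1$ and $C_2$. Without the implication $C(i,k) = 0 \Rightarrow i = k$, the optimal plan $\pi^\star$ could a priori split mass across several targets while still respecting all the vanishing equalities; it is precisely this axiom that collapses the support of $\pi^\star$ into an honest one-to-one matching between the node sets. Everything else, including the forward direction, is essentially bookkeeping.
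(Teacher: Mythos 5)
Your proof is correct in both directions, but the converse takes a genuinely different and more elementary route than the paper's. The paper does not analyze the support of the optimal plan directly: it forms the blended matrices $\hat{C}_1(i,k)=\tfrac12 C_1(i,k)+\tfrac12 d(a_i,a_k)$ and $\hat{C}_2(j,l)=\tfrac12 C_2(j,l)+\tfrac12 d(b_j,b_l)$, shows via a Jensen/triangle-inequality estimate that the optimal $FGW$ plan gives zero Gromov--Wasserstein cost for $(\hat{C}_1,\hat{C}_2)$, and then invokes M\'emoli's characterization of $GW=0$ as the existence of a measure-preserving isometry to obtain $\sigma$; the three properties are then unpacked from the isometry of $\hat{C}$ together with $H_q(M_{AB},\pi^*)=0$. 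Your support analysis is shorter and self-contained (no appeal to an external $GW$ theorem), and it correctly isolates the crux. The one substantive difference in hypotheses: your argument needs the separation property $C(j_1,j_2)=0\Rightarrow j_1=j_2$ for each structure matrix individually, whereas the paper only assumes the pairs $(x_i,a_i)$ are pairwise distinct, so that $\hat{C}$ separates points even when $C$ alone does not (two nodes at structural distance zero but with different features are still allowed); your route would need that degenerate case handled separately, e.g.\ by running your support argument on $\hat{C}$ instead of $C$. Two minor points: your claim that conditions \eqref{weightpreserve}--\eqref{structurepreserve} alone force $\sigma$ to be a bijection is not justified as written ($h=g=(\tfrac12,\tfrac12)$ with $\sigma(1)=\sigma(2)=1$ satisfies $h_i=g_{\sigma(i)}$ yet is not injective), so in the forward direction you should either take bijectivity as part of the hypothesis, as the main-text statement of the theorem does, or derive it from the standing non-degeneracy assumptions; and you implicitly use that $d$ is a metric (so $d(a_i,b_j)=0\Rightarrow a_i=b_j$), which is consistent with the paper's setup and worth stating.
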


\begin{proof} First, let us suppose that $n=m$ and that such a bijection exists. Then if we consider the transport map $\pi^{*}$ associated with $i \rightarrow i$ and $j\rightarrow \sigma(i)$ \textit{i.e} the map $\pi^{*}=(I_{d} \times \sigma)$ with $I_{d}$ the identity map. 

By eq (\ref{weightpreserve}), $\pi^{*} \in \couplingset(h,g)$ and clearly using (\ref{featurepreserve}) and (\ref{structurepreserve}):

\begin{equation}
\begin{split}
E_{q}(C_{1},C_{2},\pi^{*})&=(1-\alpha) \sum_{i,k} d(a_{i},b_{\sigma(i)})^{q} h_{i} g_{\sigma(i)} h_{k} g_{\sigma(k)}\\
&+\alpha  \sum_{i,k} |C_{1}(i,k)-C_{2}(\sigma(i),\sigma(k))|^{q} h_{i} g_{\sigma(i)} h_{k} g_{\sigma(k)} \\
&=0
\end{split}
\end{equation}

We can conclude that $\fgwdistance=0$.

Conversely, suppose that $\fgwdistance=0$ and $q\geq1$. We define : 

\begin{equation}
\label{hatC1}
\forall i,k \in \{1,...,n\}^{2} , \hat{C_{1}}(i,k)=\frac{1}{2} C_{1}(i,k) + \frac{1}{2} d(a_{i},a_{k})
\end{equation}

\begin{equation}
\label{hatC2}
\forall j,l \in \{1,...,m\}^{2} , \hat{C_{2}}(j,l)=\frac{1}{2} C_{2}(j,l) + \frac{1}{2} d(b_{j},b_{l})
\end{equation}

To prove the existence of a bijection $\sigma$ satisfying the theorem properties we will prove that the Gromov-Wasserstein distance ${GW}_{q}(\hat{C_{1}},\hat{C_{2}},\mu,\nu)$ vanishes.

Let $\pi \in \couplingset(h,g)$ be any admissible transportation plan. Then for $n \geq 1$, :

\begin{equation*}
\begin{split}
&J_{n}(\hat{C_1},\hat{C_2},\pi) =\sum_{i,j,k,l} L(\hat{C_{1}}(i,k),\hat{C_{2}}(j,l))^{n} \pi_{i,j} \pi_{k,l} \\
&=\sum_{i,j,k,l} \left|\frac{1}{2} (C_{1}(i,k)- C_{2}(j,l)) + \frac{1}{2} (d(a_{i},a_{k})-d(b_{j},b_{l})) \right|^{n}  \pi_{i,j} \pi_{k,l} \\
&\stackrel{*}{\leq} \sum_{i,j,k,l} \frac{1}{2}  \left|C_{1}(i,k)- C_{2}(j,l)\right|^{n}  \pi_{i,j} \pi_{k,l} \\
&+ \sum_{i,j,k,l} \frac{1}{2}  \left| d(a_{i},a_{k})-d(b_{j},b_{l}) \right|^{n} \pi_{i,j} \pi_{k,l}
\end{split}
\end{equation*}

In (*) we used the convexity of $t \rightarrow t^{n}$ and Jensen inequality. We denote the first term $(I)$ and $(II)$ the second term. Combining triangle inequalities $d(a_{i},a_{k})\leq d(a_{i},b_{j}) + d(b_{j},a_{k}) $ and $d(b_{j},a_{k})\leq d(b_{j},b_{l}) + d(b_{l},a_{k}) $ we have :

\begin{equation}
\label{triangle}
d(a_{i},a_{k})\leq d(a_{i},b_{j})+ d(a_{k},b_{l})+d(b_{j},b_{l})
\end{equation}

We split (II) in two parts $S_{1}=\{i,j,k,l \ ;  d(a_{i},a_{k})-d(b_{j},b_{l}) \geq 0 \}$ and $S_{2}=\{i,j,k,l \ ;  d(a_{i},a_{k})-d(b_{j},b_{l}) \leq 0 \}$ such that

\begin{equation*}
\begin{split}
 (II) &= \sum_{i,j,k,l \in S_{1}} (d(a_{i},a_{k})-d(b_{j},b_{l}))^{n} \pi_{i,j} \pi_{k,l} \\
&+ \sum_{i,j,k,l \in S_{2}} (d(b_{j},b_{l}))-d(a_{i},a_{k}))^{n} \pi_{i,j} \pi_{k,l}
\end{split}
\end{equation*}
In the same way as Eq.~\eqref{triangle} we have :

\begin{equation}
\label{triangle2}
d(b_{j},b_{l})\leq d(a_{i},a_{k})+d(a_{i},b_{j})+ d(a_{k},b_{l})
\end{equation}

So Eq.~\eqref{triangle} and \eqref{triangle2} give :

\begin{equation}
\begin{split}
(II) & \leq \sum_{i,j,k,l} \frac{1}{2} | d(a_{i},b_{j}) + d(a_{k},b_{l}) |^{n} \pi_{i,j},\pi_{k,l} \\
&\stackrel{def}{=}M_{n}(\pi)
\end{split}
\end{equation}

Finally we have shown that  :

{\small
\begin{equation}
\label{inegalitestyle}
\forall \pi \in \couplingset(h,g), \forall n \geq 1, \ J_{n}(\hat{C_1},\hat{C_2},\pi) \leq \frac{1}{2} J_{n}(C_1,C_2,\pi) + M_{n}(\pi)
\end{equation}
}

Now let $\pi^{*}$ be the optimal coupling for $\fgwdistance$. If $\fgwdistance=0$ then since $E_{q}(C_{1},C_{2},\pi^{*}) \geq \alpha J_{q}(C_1,C_2,\pi^{*}) $ and $E_{q}(C_{1},C_{2},\pi^{*}) \geq (1-\alpha) H_{q}(M_{AB},\pi^{*}) $, we have:

 \begin{equation}
 \label{Jzero}
 J_{q}(C_1,C_2,\pi^{*})=0
 \end{equation}

and

$$H_{q}(M_{AB},\pi^{*})=0$$

Then $\sum_{i,j} d(a_{i},b_{j})^{q} \pi^{*}_{i,j}=0$. Since all terms are positive we can conclude that $\forall m \in \mathbb{N}^{*}, \sum_{i,j} d(a_{i},b_{j})^{m} \pi^{*}_{i,j}=0$. 

In this way :

{\small
\begin{equation}
\begin{split}
M_{q}(\pi^{*})&=\frac{1}{2} \sum_{h} {q\choose p} \big(\sum_{i,j} d(a_{i},b_{j})^{p} \pi^{*}_{i,j}\big) \big(\sum_{k,l} d(a_{k},b_{l})^{q-p} \pi^{*}_{k,l}\big)\\
&=0
\end{split}
\end{equation}
}

Using equations \eqref{inegalitestyle} and \eqref{Jzero} we have shown : $$J_{q}(\hat{C_1},\hat{C_2},\pi^{*})=0$$

So $\pi^{*}$ is the optimal coupling for ${GW}_{q}(\hat{C_{1}},\hat{C_{2}},\mu,\nu)$ and ${GW}_{q}(\hat{C_{1}},\hat{C_{2}},\mu,\nu)=0$. By virtue to Gromov-Wasserstein properties (see \cite{springerlink:10.1007/s10208-011-9093-5}), there exists an isomorphism between the metric spaces associated with $\mu$ and $\nu$. In the discrete case this results in the existence of a function $\sigma : \{1,..,m\} \rightarrow \{1,..,n\}$ which is a weight preserving isometry and thus bijective. In this way, we have $m=n$ and $\sigma$ verifiying Eq~\eqref{weightpreserve}. The isometry property leads also to :

\begin{equation}
\label{hat}
\forall i,k \in  \{1,..,n\}^{2}, \ \hat{C_{1}}(i,k)=\hat{C_{2}}(\sigma(i),\sigma(k))
\end{equation}

Moreover, since $\pi^{*}$ is the optimal coupling for ${GW}_{q}(\hat{C_{1}},\hat{C_{2}},\mu,\nu)$ leading to a zero cost,  then $\pi^{*}$ is supported by $\sigma$, in particular $\pi^{*}=(I_{d} \times \sigma)$ 

So  $H_{q}(M_{AB},\pi^{*})=\sum_{i} d(a_{i},b_{\sigma(i)})^{q} h_{i} g_{\sigma(i)}$. Since $H_{q}(M_{AB},\pi^{*})=0$ and all the weights are strictly positive we can conclude that $a_{i}=b_{\sigma(i)}$.

In this way, $d(a_{i},a_{k})=d(b_{\sigma(i)},b_{\sigma(k)})$, so using the equality (\ref{hat}) and the definition of $\hat{C_1}$ and $\hat{C_2}$ in \eqref{hatC1} and \eqref{hatC2} we can conclude that :
$$\forall i,k \in  \{1,..,n\} \times  \{1,..,n\},  \ C_{1}(i,k)=C_{2}(\sigma(i),\sigma(k))$$
which concludes the proof.

\end{proof}


\paragraph{Triangle Inequality}

\begin{theorem} 
For all $q=1$, $FGW$ verifies the triangle inequality.
\end{theorem}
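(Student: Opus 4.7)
The plan is to adapt the classical gluing-lemma argument used for the Wasserstein and Gromov–Wasserstein triangle inequalities to the fused setting. Consider three structured data $\mu_1,\mu_2,\mu_3$ with respective weights $h\in\Sigma_n$, $g\in\Sigma_m$, $f\in\Sigma_p$, features $(a_i),(b_j),(c_k)$ and structure matrices $C_1,C_2,C_3$. Let $\pi^{12}\in\couplingset(h,g)$ and $\pi^{23}\in\couplingset(g,f)$ be optimal for $FGW_{1,\alpha}(\mu_1,\mu_2)$ and $FGW_{1,\alpha}(\mu_2,\mu_3)$ respectively. Since $g_j>0$ for every $j$, define the three-dimensional gluing tensor
\[
S_{i,j,k}=\frac{\pi^{12}_{i,j}\,\pi^{23}_{j,k}}{g_j},
\]
and set $\pi^{13}_{i,k}=\sum_j S_{i,j,k}$. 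A direct computation using the marginal constraints on $\pi^{12}$ and $\pi^{23}$ shows that $\sum_k S_{i,j,k}=\pi^{12}_{i,j}$ and $\sum_i S_{i,j,k}=\pi^{23}_{j,k}$, hence $\pi^{13}\in\couplingset(h,f)$.

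Next I would bound $E_1(M_{AC},C_1,C_3,\pi^{13})$ term by term. For the linear (feature) part, writing $\pi^{13}_{i,k}=\sum_j S_{i,j,k}$ and applying the triangle inequality $d(a_i,c_k)\le d(a_i,b_j)+d(b_j,c_k)$ gives
\[
\sum_{i,k} d(a_i,c_k)\pi^{13}_{i,k}
\le \sum_{i,j} d(a_i,b_j)\pi^{12}_{i,j} + \sum_{j,k} d(b_j,c_k)\pi^{23}_{j,k}.
\]
For the quadratic (structure) part this is the main obstacle: I expand $\pi^{13}_{i,k}\pi^{13}_{i',k'}=\sum_{j,j'}S_{i,j,k}S_{i',j',k'}$ and use
\[
|C_1(i,i')-C_3(k,k')|\le |C_1(i,i')-C_2(j,j')|+|C_2(j,j')-C_3(k,k')|.
\]
Marginalising $S$ over the indices that do not appear in the first summand (respectively the second) yields
\[
J_1(C_1,C_3,\pi^{13})\le J_1(C_1,C_2,\pi^{12})+J_1(C_2,C_3,\pi^{23}),
\]
because $\sum_{k,k'}S_{i,j,k}S_{i',j',k'}=\pi^{12}_{i,j}\pi^{12}_{i',j'}$ and symmetrically for the other term.

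Combining the two bounds with the convex weights $(1-\alpha)$ and $\alpha$ gives
\[
E_1(M_{AC},C_1,C_3,\pi^{13})\le E_1(M_{AB},C_1,C_2,\pi^{12})+E_1(M_{BC},C_2,C_3,\pi^{23}).
\]
Since $\pi^{13}$ is admissible, $FGW_{1,\alpha}(\mu_1,\mu_3)\le E_1(M_{AC},C_1,C_3,\pi^{13})$, and by optimality of $\pi^{12},\pi^{23}$ the right-hand side equals $FGW_{1,\alpha}(\mu_1,\mu_2)+FGW_{1,\alpha}(\mu_2,\mu_3)$, which is the desired triangle inequality. The key technical step, and the one I expect to require the most care when written out fully, is the marginalisation argument that reduces the quadruple sum in $S\otimes S$ to the intended products of $\pi^{12}$ (respectively $\pi^{23}$) with themselves; everything else is a routine application of the triangle inequalities for $d$ and $|\cdot|$.
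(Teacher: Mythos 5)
Your proof is correct and follows essentially the same route as the paper: both glue the two optimal plans through the middle marginal (your tensor $S_{i,j,k}=\pi^{12}_{i,j}\pi^{23}_{j,k}/g_j$ is exactly the paper's $P\,\mathrm{diag}(1/g)\,Q$ written with an explicit middle index), then apply the triangle inequalities for $d$ and $|\cdot|$ and marginalise out the unused indices. The only difference is presentational — you split the linear and quadratic terms before bounding, whereas the paper carries the combined $E_1$ expression through — and your marginalisation identities $\sum_k S_{i,j,k}=\pi^{12}_{i,j}$, $\sum_i S_{i,j,k}=\pi^{23}_{j,k}$ do close the step you flag as delicate.
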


\begin{proof}
To prove the triangle inequality of $FGW$ distance for arbitrary measures we will use the gluing lemma (see \cite{Villani}) which stresses the existence of couplings with a prescribed structure. Let $h, \ g, \ f \in \Sigma_{n} \times \Sigma_{m} \times \Sigma_{k}$. Let also $\mu=\sum_{i=1}^{n} h_{i} \delta_{a_{i},x_{i}}$, $\nu=\sum_{j=1}^{m} g_{j} \delta_{b_{j},y_{j}}$ and $\gamma=\sum_{p=1}^{k} f_{p} \delta_{c_{p},z_{p}}$ be three structured data as described in the paper. We note $C_{1}(i,k)$ the distance between vertices $x_{i}$ and $x_{k}$, $C_{2}(i,k)$ the distance between vertices $y_{i}$ and $y_{k}$ and $C_{3}(i,k)$ the distance between vertices $z_{i}$ and $z_{k}$.

Let $P$ and $Q$ be two optimal solutions of the $FGW$ transportation problem between $\mu$ and $\nu$ and $\nu$ and $\gamma$ respectively. 

We define :

$$S=P\text{diag}(\frac{1}{g})Q$$ (note that $S$ is well defined since $g_j \neq 0$ for all $j$). Then by definition $S \in \couplingset(h,f)$ because :

$S 1_{m}=P\text{diag}(\frac{1}{g})Q 1_{m} = P\text{diag}(\frac{g}{g}) =P1_{m}=h$ (same reasoning for $f$).

We first prove the triangle inequality for the case $q=1$.

By suboptimality of $S$ :

{\small
\begin{equation*}
\begin{split}
&FGW_{1,\alpha}(C_{1},C_{3},\mu,\gamma) \\
&\leq \sum_{i,j,k,l} (1-\alpha) d(a_{i},c_{j})+\alpha L(C_{1}(i,k),C_{3}(j,l)) S_{i,j} S_{k,l} \\
&= \sum_{i,j,k,l} ((1-\alpha) d(a_{i},c_{j})+\alpha L(C_{1}(i,k),C_{3}(j,l)) \\
&\times  \big(\sum_{e} \frac{P_{i,e} Q_{e,j}}{g_{e}} \big) \big(\sum_{o} \frac{P_{k,o} Q_{o,l}}{g_{o}} \big) \\
&= \sum_{i,j,k,l} \big((1-\alpha) d(a_{i},c_{j})+\alpha |C_{1}(i,k)-C_{3}(j,l)|\big) \\
&\times  \big(\sum_{e} \frac{P_{i,e} Q_{e,j}}{g_{e}} \big) \big(\sum_{o} \frac{P_{k,o} Q_{o,l}}{g_{o}} \big) \\
&\stackrel{*}{\leq}\sum_{i,j,k,l,e,o} \big((1-\alpha) (d(a_{i},b_{e})+d(b_{e},c_{j}))\\
&+\alpha |C_{1}(i,k)-C_{2}(e,o)+C_{2}(e,o)-C_{3}(j,l)|\big) \\
&\times  \big(\frac{P_{i,e} Q_{e,j}}{g_{e}} \big) \big( \frac{P_{k,o} Q_{o,l}}{g_{o}} \big) \\
&\stackrel{**}{\leq} \sum_{i,j,k,l,e,o} \big((1-\alpha)d(a_{i},b_{e})+\alpha L(C_{1}(i,k),C_{2}(e,o))\big) \\
& \times \frac{P_{i,e} Q_{e,j}}{g_{e}} \frac{P_{k,o} Q_{o,l}}{g_{o}} \\
&+ \sum_{i,j,k,l,e,o} \big((1-\alpha) d(b_{e},c_{j})+ \alpha L(C_{2}(e,o),C_{3}(j,l))\big) \\
& \times \frac{P_{i,e} Q_{e,j}}{g_{e}} \frac{P_{k,o} Q_{o,l}}{g_{o}} \\
\end{split}
\end{equation*}
}

where in (*) we use the triangle inequality of $d$ and in (**) the triangle inequality of $|.|$

Moreover we have :

\begin{equation*}
\label{conservation}
\sum_{j} \frac{Q_{e,j}}{g_{e}}=1, \ \sum_{l} \frac{Q_{o,l}}{g_{o}}=1, \ \sum_{i} \frac{P_{i,e}}{g_{e}}=1, \ \sum_{k} \frac{P_{k,o}}{g_{o}}=1
\end{equation*}

So,
{\small
\begin{equation*}
\begin{split}
&FGW_{1,\alpha}(C_{1},C_{3},\mu,\gamma) \\ 
&\leq \sum_{i,k,e,o} \big((1-\alpha)d(a_{i},b_{e}) + \alpha L(C_{1}(i,k),C_{2}(e,o))\big) P_{i,e} P_{k,o} \\ 
&+\sum_{l,j,e,o} \big((1-\alpha)d(b_{e},c_{j}) + \alpha L(C_{2}(e,o),C_{3}(j,l))\big) Q_{e,j} Q_{o,l} 
\end{split}
\end{equation*}
}

Since $P$ and $Q$ are the optimal plans we have :

{\small
\begin{equation*}
\begin{split}
FGW_{1,\alpha}(C_{1},C_{3},\mu,\gamma) &\leq FGW_{1,\alpha}(C_{1},C_{2},\mu,\nu) \\
&+ FGW_{1,\alpha}(C_{2},C_{3},\nu,\gamma)
\end{split}
\end{equation*}
}

which prove the triangle inequality for $q=1$. 

\end{proof}

\begin{theorem} 
For all $q>1$, $FGW$ verifies the relaxed triangle inequality :
{\small
\begin{equation*}
\begin{split}
FGW_{q,\alpha}(C_{1},C_{3},\mu,\gamma) &\leq 2^{q-1}\big(FGW_{q,\alpha}(C_{1},C_{2},\mu,\nu)\\
 &+ FGW_{q,\alpha}(C_{2},C_{3},\nu,\gamma)\big)
\end{split}
\end{equation*}
}
\end{theorem}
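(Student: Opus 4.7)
The plan is to mimic the $q=1$ proof almost verbatim, using the gluing lemma to construct an admissible intermediate coupling, and then inserting one extra convexity inequality $(a+b)^q \le 2^{q-1}(a^q+b^q)$ (valid for $a,b\ge 0$ and $q\ge 1$ by convexity of $t\mapsto t^q$) to absorb the $q$-th powers across triangle inequalities. That single inequality is exactly what generates the stated $2^{q-1}$ factor.

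First I would fix optimal couplings $P\in\couplingset(h,g)$ for $FGW_{q,\alpha}(C_1,C_2,\mu,\nu)$ and $Q\in\couplingset(g,f)$ for $FGW_{q,\alpha}(C_2,C_3,\nu,\gamma)$, and form the glued plan $S=P\,\mathrm{diag}(1/g)\,Q$; as in the $q=1$ proof this lies in $\couplingset(h,f)$, so by suboptimality
\[
FGW_{q,\alpha}(C_1,C_3,\mu,\gamma)\le \sum_{i,j,k,l}\bigl((1-\alpha)d(a_i,c_j)^q+\alpha|C_1(i,k)-C_3(j,l)|^q\bigr)S_{i,j}S_{k,l}.
\]
Expanding $S_{i,j}S_{k,l}=\sum_{e,o}\frac{P_{i,e}Q_{e,j}}{g_e}\frac{P_{k,o}Q_{o,l}}{g_o}$, I would then apply the triangle inequalities $d(a_i,c_j)\le d(a_i,b_e)+d(b_e,c_j)$ and $|C_1(i,k)-C_3(j,l)|\le |C_1(i,k)-C_2(e,o)|+|C_2(e,o)-C_3(j,l)|$ inside each term.

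Now comes the only new ingredient: raising to the $q$-th power and using $(x+y)^q\le 2^{q-1}(x^q+y^q)$ gives
\[
d(a_i,c_j)^q\le 2^{q-1}\bigl(d(a_i,b_e)^q+d(b_e,c_j)^q\bigr)
\]
and similarly for the structural term. Substituting into the bound, the whole sum splits into two pieces with a common factor $2^{q-1}$: one piece only depends on $P$ (through the indices $i,k,e,o$) and the other only on $Q$ (through $e,o,j,l$). The marginal identities $\sum_i P_{i,e}/g_e=\sum_k P_{k,o}/g_o=1$ and $\sum_j Q_{e,j}/g_e=\sum_l Q_{o,l}/g_o=1$ collapse the auxiliary summations exactly as in the $q=1$ argument, leaving
\[
2^{q-1}\bigl(E_q(M_{AB},C_1,C_2,P)+E_q(M_{BC},C_2,C_3,Q)\bigr),
\]
which by optimality of $P$ and $Q$ equals $2^{q-1}\bigl(FGW_{q,\alpha}(C_1,C_2,\mu,\nu)+FGW_{q,\alpha}(C_2,C_3,\nu,\gamma)\bigr)$.

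The main obstacle, as in the $q=1$ case, is not any single step but the bookkeeping: one must be careful to apply the convexity inequality once (to produce a single $2^{q-1}$ factor) rather than repeatedly, and to check that after splitting the cross terms $d(a_i,b_e)^q$ and $d(b_e,c_j)^q$ (resp.\ the structural analogues) the unused summation indices really do marginalize to $1$ and not to something depending on $g$. Once the marginal simplifications are verified, the relaxed triangle inequality follows directly.
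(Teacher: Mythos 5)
Your proposal is correct and follows essentially the same route as the paper's proof: glue the optimal plans via $S=P\,\mathrm{diag}(1/g)\,Q$, invoke suboptimality, apply the triangle inequalities for $d$ and $|\cdot|$, and then use $(x+y)^q\le 2^{q-1}(x^q+y^q)$ before marginalizing out the auxiliary indices. The only cosmetic difference is that you justify this power inequality by convexity of $t\mapsto t^q$ whereas the paper derives it from H\"older's inequality; both are valid.
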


\begin{proof}

Let $q > 1$, We have :

\begin{equation}
\label{holder}
\forall x,y \in \mathbb{R}_{+}, \ (x+y)^{q} \leq 2^{q-1} \big(x^{q} + y^{q}\big)
\end{equation}

Indeed,

$(x+y)^{q} = \big((\frac{1}{2^{q-1}})^{\frac{1}{q}} \frac{x}{(\frac{1}{2^{q-1}})^{\frac{1}{q}}} + (\frac{1}{2^{q-1}})^{\frac{1}{q}} \frac{y}{(\frac{1}{2^{q-1}})^{\frac{1}{q}}}\big)^{q} \leq \big[ (\frac{1}{2^{q-1}})^{\frac{1}{q-1}} +(\frac{1}{2^{q-1}})^{\frac{1}{q-1}}\big]^{q-1} \big(\frac{x^{q}}{\frac{1}{2^{q-1}}} + \frac{y^{q}}{\frac{1}{2^{q-1}}}\big) \\
= \frac{x^{q}}{\frac{1}{2^{q-1}}} + \frac{y^{q}}{\frac{1}{2^{q-1}}}$

Last inequality is a consequence of Hölder inequality. Then using same notations :

{\small
\begin{equation*}
\begin{split}
&FGW_{q,\alpha}(C_{1},C_{3},\mu,\gamma) \\
&\leq \sum_{i,j} \sum_{k,l} (1-\alpha) d(a_{i},c_{j})^{q}+\alpha L(C_{1}(i,k),C_{3}(j,l))^{q} S_{i,j} S_{k,l} \\
&= \sum_{i,j} \sum_{k,l} ((1-\alpha) d(a_{i},c_{j})^{q}+\alpha L(C_{1}(i,k),C_{3}(j,l))^{q} \\
&\times  \big(\sum_{e} \frac{P_{i,e} Q_{e,j}}{g_{e}} \big) \big(\sum_{o} \frac{P_{k,o} Q_{o,l}}{g_{o}} \big) \\
&= \sum_{i,j,k,l} \big((1-\alpha) d(a_{i},c_{j})^{q}+\alpha |C_{1}(i,k)-C_{3}(j,l)|^{q}\big) \\
&\times  \big(\sum_{e} \frac{P_{i,e} Q_{e,j}}{g_{e}} \big) \big(\sum_{o} \frac{P_{k,o} Q_{o,l}}{g_{o}} \big) \\
&\stackrel{*}{\leq}\sum_{i,j,k,l,e,o} \big((1-\alpha) (d(a_{i},b_{e})+d(b_{e},c_{j}))^{q}\\
&+\alpha |C_{1}(i,k)-C_{2}(e,o)+C_{2}(e,o)-C_{3}(j,l)|^{q}\big) \\
&\times  \big(\frac{P_{i,e} Q_{e,j}}{g_{e}} \big) \big( \frac{P_{k,o} Q_{o,l}}{g_{o}} \big) \\
&\stackrel{**}{\leq} 2^{q-1}\sum_{i,j,k,l,e,o} \big((1-\alpha)d(a_{i},b_{e})^{q}+\alpha L(C_{1}(i,k),C_{2}(e,o))^{q}\big) \\
& \times \frac{P_{i,e} Q_{e,j}}{g_{e}} \frac{P_{k,o} Q_{o,l}}{g_{o}} \\
&+ 2^{q-1}\sum_{i,j,k,l,e,o} \big((1-\alpha) d(b_{e},c_{j})^{q}+ \alpha L(C_{2}(e,o),C_{3}(j,l))^{q}\big) \\
& \times \frac{P_{i,e} Q_{e,j}}{g_{e}} \frac{P_{k,o} Q_{o,l}}{g_{o}} \\
\end{split}
\end{equation*}
}

where in (*) we use the triangle inequality of $d$ and in (**) the triangle inequality of $|.|$ and \eqref{holder}.

Since $P$ and $Q$ are the optimal plans we have :

{\small
\begin{equation*}
\begin{split}
FGW_{q,\alpha}(C_{1},C_{3},\mu,\gamma) &\leq 2^{q-1}\big(FGW_{q,\alpha}(C_{1},C_{2},\mu,\nu)\\
 &+ FGW_{q,\alpha}(C_{2},C_{3},\nu,\gamma)\big)
\end{split}
\end{equation*}
}

Which prove that $FGW_{q,\alpha}$ defines a semi metric for $q>1$ with coefficient $2^{q-1}$ for the triangle inequality relaxation.

\end{proof}

\subsection{Comparaison with $W$ and $GW$}

\paragraph{Cross validation results}

\begin{table}
\label{discretetable}
    \caption{Percentage of $\alpha$ chosen in $]0,...,1[$ compared to $\{0,1\}$ for discrete labeled graphs}
\begin{small}
    \resizebox{1\columnwidth}{!}{
\begin{sc}
    \setlength{\tabcolsep}{4pt}
\begin{tabular}{llll}
\toprule
{Discrete attr.} &          MUTAG &          NCI1 &           PTC \\
\midrule
\midrule
FGW raw sp               &  100\% &     100\% &  98\% \\
FGW wl h=2 sp       &   100\% &          100\% & 88\% \\
FGW wl  h=4 sp       &   100\% &          100\% &  88\% \\
\midrule
\end{tabular}
\quad
\end{sc}}
\end{small}
\end{table}

During the nested cross validation, we divided the dataset into 10 and use 9 folds for training, where $\alpha$ is chosen within $[0,1]$ \textit{via} a 10-CV cross-validation, 1 fold for testing, with the best value of $\alpha$ (with the best average accuracy on the 10-CV) previously selected.  The experiment is repeated 10 times for each dataset except for MUTAG and PTC where it is repeated 50 times. Table \ref{discretetable} and \ref{continuoustable} report the average number of time $\alpha$ was chose within $]0,...1[$ without $0$ and $1$ corresponding to the Wasserstein and Gromov-Wasserstein distances respectively. Results suggests that both structure and feature pieces of
information are necessary as $\alpha$ is consistently selected inside $]0,...1[$ except for PTC and COX2.

\begin{table}
\label{continuoustable}
    \caption{Percentage of $\alpha$ chosen in $]0,...,1[$ compared to $\{0,1\}$ for vector attributed graphs}
\begin{small}
    \begin{center}
    \resizebox{1\columnwidth}{!}{
\begin{sc}
    \setlength{\tabcolsep}{4pt}
\begin{tabular}{lllllll}
\toprule
{Vector attributes} &                   BZR &          COX2 &     CUNEIFORM &       ENZYMES &        PROTEIN &      SYNTHETIC \\
\midrule
\midrule
\small{FGW sp}             &  100 \% &  90\% &  100\% &  100\% &  100\% &  100\% \\
\midrule
\end{tabular}
\end{sc}}
\end{center}
\end{small}

\end{table}

\begin{table}
    \caption{Average classification accuracy on the graph datasets with discrete attributes.\label{tab:wgw:disc}}
\begin{small}
    \resizebox{1.08\linewidth}{!}{
\begin{sc}
    \setlength{\tabcolsep}{4pt}
\begin{tabular}{llll}
\toprule
{Discrete attr.} &          MUTAG &          NCI1 &           PTC-MR \\
\midrule
\midrule
FGW raw sp               &  83.26$\pm$10.30 &     72.82$\pm$1.46 &  55.71$\pm$6.74 \\
FGW wl h=2 sp       &   86.42$\pm$7.81 &          85.82$\pm$1.16 &  {63.20$\pm$7.68} \\
FGW wl  h=4 sp       &   \textbf{88.42$\pm$5.67} &          \textbf{86.42$\pm$1.63}* & 65.31$\pm$7.90 \\
\midrule
W raw sp       &  79.36$\pm$3.49  &   70.5$\pm$4.63        &  54.79$\pm$5.76 \\
W wl h=2 sp       &  87.78$\pm$8.64  &     85.83$\pm$1.75      & 63.90$\pm$7.66  \\
W wl  h=4 sp       & 87.15$\pm$8.23   &     86.42$\pm$1.64     & \textbf{66.28$\pm$6.95}* \\
\midrule
GW sp &82.73$\pm$9.59 & 73.40$\pm2.80$ &  54.45$\pm$ 6.89\\
\bottomrule
\end{tabular}
\quad
\end{sc}}
\end{small}
\end{table}

\paragraph{Nested CV results} We report in tables \ref{tab:wgw:vec} and \ref{tab:wgw:disc} the average classification accuracies of the nested classification procedure by taking $W$ and $GW$ instead of $FGW$ (\textit{i.e} by taking $\alpha=0,1$). Best result for each dataset is in bold. A (*) is added when best score does not yield to a significative improvement compared to the second best score. The significance is based on a Wilcoxon signed rank test between the best method and the second one.

Results illustrates that $FGW$ encompasses the two cases of $W$ and $GW$, as scores of $FGW$ are usually greater or equal on every dataset than scores of both $W$ and $GW$ and when it is not the case the difference is not statistically significant.

\paragraph{Timings} In this paragraph we provide some timings for the discrete attributed datasets. Table \ref{tab:wgw:timings} displays the average timing for computing $FGW$ between two pair of graphs.

\begin{table}
    \caption{Average timings for the computation of $FGW$ between two pairs of graph \label{tab:wgw:timings}}
\begin{small}
    \resizebox{1.08\linewidth}{!}{
\begin{sc}
    \setlength{\tabcolsep}{4pt}
\begin{tabular}{llll}
\toprule
{Discrete attr.} &          MUTAG &          NCI1 &           PTC-MR \\
\midrule
\midrule
FGW               &  2.5 ms  &  7.3 ms  & 3.7 ms \\
\bottomrule
\end{tabular}
\quad
\end{sc}}
\end{small}
\end{table}

\begin{table*}[ht!]
    \caption{Average classification accuracy on the graph datasets with vector attributes.\label{tab:wgw:vec}}
\begin{small}
    \begin{center}
    \resizebox{.9\textwidth}{!}{
\begin{sc}
    \setlength{\tabcolsep}{4pt}
\begin{tabular}{lllllll}
\toprule
{Vector attributes} &                   BZR &          COX2 &     CUNEIFORM &       ENZYMES &        PROTEIN &      SYNTHETIC \\
\midrule
\midrule
\small{FGW sp}             &  85.12$\pm$4.15 &  \textbf{77.23$\pm$4.86}* &  \textbf{76.67$\pm$7.04} &  71.00$\pm$6.76 &   74.55$\pm$2.74 &  \textbf{100.00$\pm$0.00} \\
\midrule
\small{W}               &  \textbf{85.36$\pm$4.87}* & 77.23$\pm$3.16  &  61.48$\pm$10.23 &\textbf{71.16$\pm$6.32}*  &  \textbf{75.98$\pm$ 1.97}*  &  34.07$\pm$11.33 \\
\midrule
\small{GW sp}                 &  82.92$\pm$6.72  & 77.65$\pm$5.88  & 50.66$\pm$8.91  & 23.66$\pm$3.63  &  71.96$\pm$ 2.40  &  41.66$\pm$4.28  \\
\bottomrule
\end{tabular}
\end{sc}}
\end{center}
\end{small}
\end{table*}

\end{document}